\def\eqref#1{equation~\ref{#1}}
\def\1{\bm{1}}
\DeclareMathAlphabet{\mathsfit}{\encodingdefault}{\sfdefault}{m}{sl}
\SetMathAlphabet{\mathsfit}{bold}{\encodingdefault}{\sfdefault}{bx}{n}
\DeclareMathOperator*{\argmin}{arg\,min}
\DeclareMathOperator*{\E}{\mathbb{E}}
\DeclareMathOperator*{\supp}{\mathrm{supp}}
\DeclareMathOperator*{\Ker}{\mathrm{Ker}}
\newcommand{\obj}{\mathrm{(o)}}
\newcommand{\attacker}{\mathrm{(a)}}
\newcommand{\const}{\mathrm{(c)}}
\newcommand{\train}{\mathrm{(tr)}}
\newcommand{\f}{f}
\newcommand{\g}{g}
\newcommand{\h}{h}
\newcommand{\gparam}{\theta}
\newcommand{\gParam}{\Theta}
\newcommand{\hparam}{w}
\newcommand{\hParam}{W}
\newcommand{\fSpace}{F}
\newcommand{\gSpace}{G}
\newcommand{\hSpace}{H}
\newcommand{\Spacein}{\mathcal{X}}
\newcommand{\Spaceout}{\mathcal{Y}}
\newcommand{\queryin}{x}
\newcommand{\queryout}{y}
\newcommand{\rvin}{X}
\newcommand{\idxobj}{j}
\newcommand{\idxattacker}{i}
\newcommand{\idxconst}{j'}
\newcommand{\idxtrain}{s}
\newcommand{\nobj}{m}
\newcommand{\nattacker}{n}
\newcommand{\nconst}{m'}
\newcommand{\ntrain}{M}
\newcommand{\epsquality}{\varepsilon}
\newcommand{\Pdist}{P}
\newcommand{\q}{q}
\newcommand{\Qdist}{Q}
\newcommand{\kernel}{k}
\newcommand{\lambdaattacker}{\lambda}
\newcommand{\A}{A}
\newcommand{\veca}{a}
\newcommand{\B}{B}
\newcommand{\vecb}{b}
\newcommand{\C}{C}
\newcommand{\Set}{S}
\newcommand{\eigenvec}{z}
\newcommand{\eigenvalue}{\alpha^{*}}
\newcommand{\eigenkkt}{\alpha}
\newcommand{\eigenspace}{Z}
\newcommand{\stepinner}{t}
\newcommand{\stepouter}{u}
\newcommand{\maxstepinner}{T}
\newcommand{\maxstepouter}{U}
\newcommand{\lrinner}{\eta}
\newcommand{\lrouter}{\beta}
\newcommand{\update}{\Phi}
\newcommand{\Z}{Z}
\newcommand{\loss}{l}
\newcommand{\ind}{\bm{1}}
\newcommand{\lipconstant}{L}
\newcommand{\distmu}{d_{\mu}}
\newcommand{\Hilbert}{\mathcal{H}}
\newcommand{\Hardmat}{H}
\newcommand{\nulldim}{d}
\newcommand{\nulleigenvec}{v}
\newcommand{\lobj}{\loss^{\obj}}
\newcommand{\lattacker}{\loss^{\attacker}}
\newcommand{\lconst}{\loss^{\const}}
\newcommand{\wargmin}{\tilde{\hparam}(\gparam)}
\newcommand{\hargmin}{\h_{\wargmin}}
\newcommand{\xobjsample}{\queryin^{\obj}_{\idxobj}}
\newcommand{\xattackersample}{\queryin^{\attacker}_{\idxattacker}}
\newcommand{\xconstsample}{\queryin^{\const}_{\idxconst}}
\newcommand{\xtrainsample}{\queryin^{\train}_{\idxtrain}}
\newcommand{\kattacker}{\kernel_{\h}}
\newcommand{\ktrue}{\kernel^{*}}
\newcommand{\gammaa}{\gamma_{\veca}}
\newcommand{\gammab}{\gamma_{\vecb}}
\newcommand{\minibatchobj}{D^{\obj}}
\newcommand{\minibatchattacker}{D^{\attacker}}
\newcommand{\regconst}{\lambda^{\const}}
\newcommand{\Lobj}{L^{\obj}}
\newcommand{\Lconst}{L^{\const}}
\newcommand{\dhParam}{d'}
\newcommand{\dgParam}{d}
\newcommand{\ftrue}{\f_{\gparam^{*}}}
\newcommand{\hundefended}{\h_{\tilde{\hparam}(\gparam^{*})}}
\newcommand{\hdefended}{\h_{\tilde{\hparam}(\gparam_{\mathrm{opt}})}}
\newcommand{\strtheorem}{Theorem}
\newcommand{\strassumption}{Assumption}
\newcommand{\strexample}{Example}
\newcommand{\strlemma}{Lemma}
\newcommand{\strproposition}{Proposition}
\newcommand{\strbackground}{Background Knowledge}
\newcommand{\strremark}{Requirement}
\newcommand{\strdefinition}{Definition}
\newtheorem{theorem}{\strtheorem}
\newtheorem{assumption}{\strassumption}
\newtheorem{lemma}{\strlemma}
\newtheorem{remark}{\strremark}
\crefname{theorem}{Theorem}{Theorems}
\crefname{assumption}{Assumption}{Assumptions}
\crefname{example}{Example}{Examples}
\crefname{lemma}{Lemma}{Lemmas}
\crefname{background}{Background Knowledge}{Background Knowledge}
\crefname{remark}{Requirement}{Requirements}
\crefname{definition}{Definition}{Definitions}
\crefname{algorithm}{Algorithm}{Algorithms}
\crefname{figure}{Fig}{Figs}
\crefname{table}{Table}{Tables}
\title{BODAME: Bilevel Optimization for Defense Against Model Extraction}
\author{Yuto Mori$^{1\dag}$, Atsushi Nitanda$^{1, 2, 3\ddag}$, Akiko Takeda$^{1, 2\star}$\\
\vspace{2mm}\\
\normalsize{\textit{$^1$Graduate School of Information Science and Technology, The University of Tokyo, Japan}}\\
\normalsize{\textit{$^2$Center for Advanced Intelligence Project, RIKEN, Japan}}\\
\normalsize{\textit{$^3$PRESTO, Japan Science and Technology Agency, Japan}}\\
\small{Email: $^\dag$yuto\_mori@mist.i.u-tokyo.ac.jp, $^\ddag$nitanda@mist.i.u-tokyo.ac.jp, $^\star$takeda@mist.i.u-tokyo.ac.jp}
}
\date{}
\begin{document}
\maketitle

\begin{abstract}
Model extraction attacks have become serious issues for service providers using machine learning.
We consider an adversarial setting to prevent model extraction under the assumption that attackers will make their best guess on the service provider's model using query accesses, and propose to build a surrogate model that significantly keeps away the predictions of the attacker's model from those of the true model.
We formulate the problem as a non--convex constrained bilevel optimization problem and show that for kernel models, it can be transformed into a non--convex 1--quadratically constrained quadratic program with a polynomial--time algorithm to find the global optimum.
Moreover, we give a tractable transformation and an algorithm for more complicated models that are learned by using stochastic gradient descent--based algorithms.
Numerical experiments show that the surrogate model performs well compared with existing defense models when the difference between the attacker's and service provider's distributions is large.
We also empirically confirm the generalization ability of the surrogate model.
\end{abstract}
\section{Introduction}
Model extraction attacks \citep{lowd2005adversarial, tramer2016stealing} have become serious problems as more and more services based on machine learning are released as application programming interfaces (APIs).
A service provider who uses such an API is at risk of having its service imitated by model extraction attacks in addition to other types of attacks using adversarial examples \citep{szegedy2014intriguing, goodfellow2015explaining} or model inversion \citep{fredrikson2015model}, that become easier to execute because of model extraction attacks.
If an attacker is able to steal the internal model, it might become public and people might stop using the service because they can use a copy in their local environments.
Furthermore, there is even a risk that a similar service will be developed by an attacker.
Model extraction attacks thus pose huge risks for service providers who need to defend their machine learning models against attackers.

Although model extraction attacks have been recently studied from various viewpoints, focusing on the targeted model classes \citep{tramer2016stealing, bastani2017interpretability, oh2018towards, milli2019model, rolnick2020reverse}, the method to choose queries \citep{orekondy2019knockoff, chandrasekaran2020exploring}, the attacker's objectives \citep{jagielski2020high_accuracy}, and the attacker's knowledge about the defender \citep{batina2019csi, pal2020activethief, krishna2020thieves}, there is not much research aimed at defending against such attacks, except those summarized in \cref{tab:defense-methods}.
These defense methods in \cref{tab:defense-methods} consider preserving the quality of the output or the surrogate function not to be far from the defender's true model, which is characteristic of the model extraction problems.
For example, rounding \citep{tramer2016stealing}, reverse sigmoid \citep{lee2019defending}, boundary differentially private layer (BDPL) \citep{zheng2019bdpl}, and maximum angular deviation (MAD) \citep{orekondy2020prediction} are the defending methods that change an output of the defender's true model by, e.g. adding a noise.
There is another type of a defense against model extraction; \citep{alabdulmohsin2014adding} builds a surrogate function that is robust against exploratory attacks, and their approach is theoretically analyzed by using active learning in terms of randomization \citep{chandrasekaran2020exploring}.

\begin{table}[ht]
\caption{\textbf{Existing work on defenses against model extraction.} \textbf{Objective} means the goal for the defender. Disagreement means to make an attacker's model apart from the defender's true model. Detection means to detect whether model extraction can be performed. \textbf{S/P} means surrogate/perturbation: it shows whether a method uses a surrogate function or a perturbation as the defense. \textbf{OptDef} means whether an optimal defense is taken to prevent the model extraction by an attacker. \textbf{Knowledge} is the defender's knowledge about an attacker that is needed for the procedure to make a defense. \textbf{B/O} means batch/online: it shows whether a defender can get attacker's queries in batch or online. \textbf{C/R} means classification/regression: it shows whether a method can be applied to a classification or regression task. \textbf{Model} is the defender's model that is the target of each method. Linear--SVM means support vector machine for linear models. One--hot Classifier means a model of which output is one--hot vector. KR means kernel regression.}

\label{tab:defense-methods}
\begin{center}
\fontsize{6.5pt}{0.25cm}\selectfont
\begin{tabular}{lccccccc} 
    \toprule
                  & Objective & S/P & OptDef & Knowledge & B/O & C/R & Model\\
    \midrule
    \citep{alabdulmohsin2014adding} & Disagreement & S & No & - & - & C & Linear--SVM\\
    Rounding \citep{tramer2016stealing} & Disagreement & P & No & Query & O & C/R & - \\
    \citep{kesarwani2018model} & Detection & - & No & Query, Model & B & C & Decision Tree \\
    Reverse Sigmoid \citep{lee2019defending} & Disagreement & P & No & Query & O & C & One--hot Classifier \\
    PRADA \citep{juuti2019prada} & Detection & - & No & Query & B & C/R & -\\
    BDPL \citep{zheng2019bdpl} & Disagreement & P & No & Query & O & C & -\\
    MAD \citep{orekondy2020prediction} & Disagreement & P & Yes & Query, Gradient & O & C & One--hot Classifier\\[2pt] \hline \addlinespace[2pt]
    \textbf{BODAME--KRR/KR (Ours.)} & \textbf{Disagreement} & \textbf{S} & \textbf{Yes} & \textbf{Query, Model} & \textbf{B} & \textbf{R} & \textbf{KR}\\ 
    \textbf{BODAME--SGD/SGA (Ours.)} & \textbf{Disagreement} & \textbf{S} & \textbf{Yes} & \textbf{Query, Model} & \textbf{B} & \textbf{C/R} & \textbf{Differentiable}\\
    \bottomrule
\end{tabular}
\end{center}
\end{table}

\begin{figure}
    \centering
    \includegraphics[width=10cm]{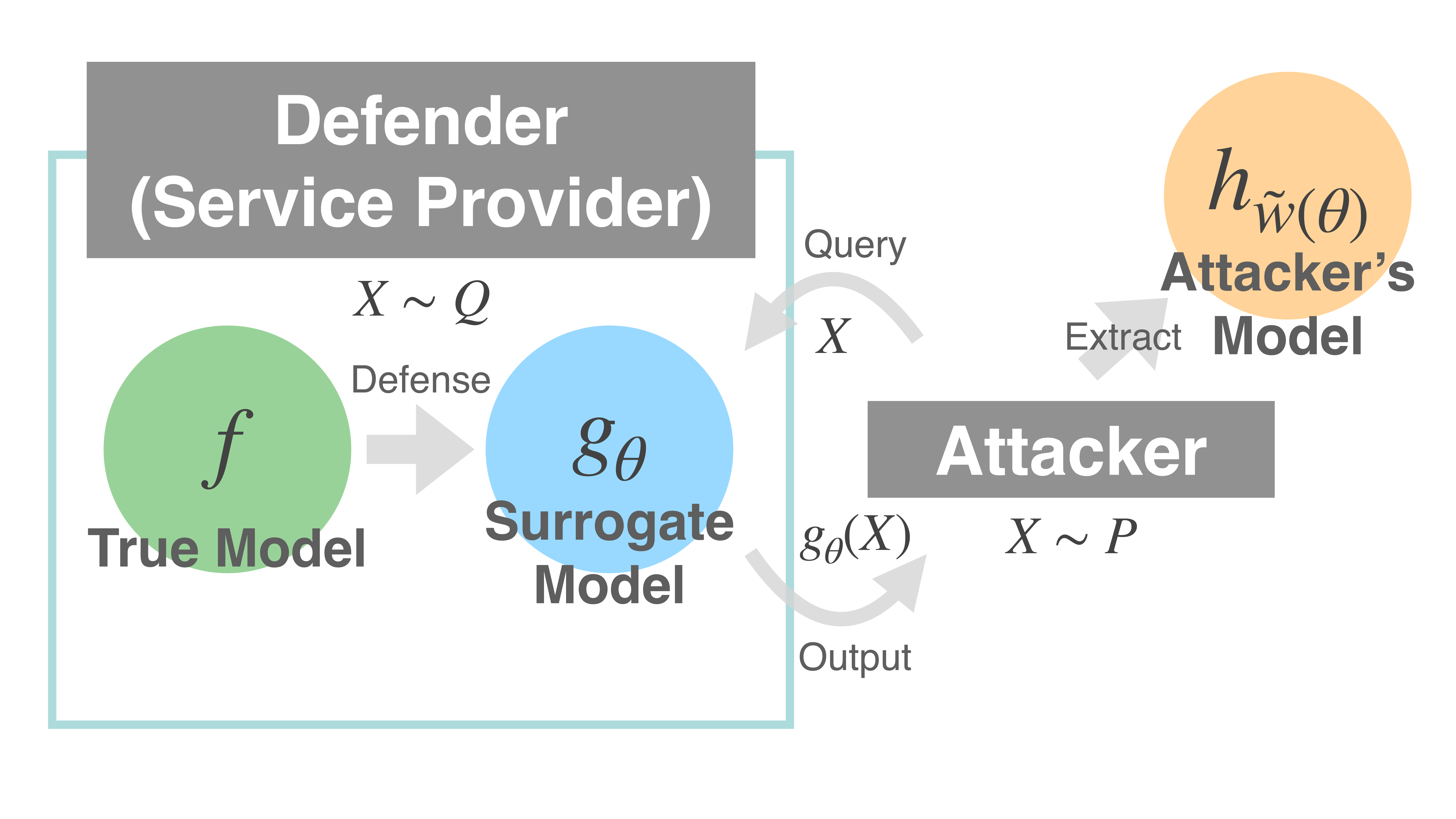}
    \caption{Overview of our attacker--defender framework}
    \label{overview}
\end{figure}

In this paper, we consider an adversarial setting to prevent model extraction by attackers, unlike the above--mentioned defense methods, under the assumption that attackers will make their best guess on the defender's model by query accesses.
The purpose of our model is to build surrogate models for defenders in such a way that the predictions between the defender's true model and the attackers' models are significantly different under the defender's distribution.
By utilizing the surrogate models instead of their true models for API services, service providers will not be damaged even if their surrogate models are imitated from the outputs of their surrogate models (see \cref{overview} for the structure of our model).
We formulate the problem as a bilevel optimization problem to construct a surrogate model including the attacker’s problem that imitates the defender’s model as a constraint. 
Another constraint requires the surrogate model to be close to the true defender's model.

The resulting constrained bilevel optimization problem is a non--convex problem.
We show that the problem can be transformed into a non--convex quadratic optimization problem having one quadratic constraint (1--QCQP) with a polynomial–time algorithm to find the global optimum in the setting that the attacker uses kernel ridge regression and the defender uses a kernel regression model.
We also devise an algorithm to find solutions to more general machine learning models under the assumption that attackers use a stochastic gradient descent (SGD)--based algorithm to imitate the defender's models.
We performed numerical experiments confirming that our algorithms succeed in the defending the model when the distributions between the attackers and defenders are substantially different.
We also show that as long as the attacker's queries follow the same distribution of queries used for defense, the defenders can use the same surrogate models without additional updates.

\section{Bilevel Optimization for Defense Against Model Extraction}
\label{sec:formulation-analysis}
In this section, we formulate Bilevel Optimization for Defense Against Model Extraction ({\bf BODAME}), theoretically analyze two extreme cases, and show the BODAME formulation in an empirical setting on the attacker's samples and defender's samples.

\subsection{Attacker--Defender setting}
In this paper, we will consider the following setting.

For the attacker, we assume {\em fidelity extraction} \citep{jagielski2020high_accuracy};
the objective of the attacker is to find a parameter of the attacker model that minimizes an expected loss function on the attacker's input distribution by using the outputs of the defender's surrogate model as supervised data. 
Here, we incorporate the attacker's optimization problem as a constraint, which leads to a bilevel formulation.
The optimization problem in the fidelity extraction has the same objective function as supervised learning models;
therefore, we can apply the standard algorithms developed for supervised learning to the attacker's optimization problem.

The defender is assumed to know some information about attackers' models, in particular, the loss functions, hyperparameters, and solution methods (or update formula) of the attacker's models including some of the initial parameter values. Although these settings are stronger than those of prior work shown as \cref{tab:defense-methods}, they seem inevitable in the adversarial setting.
In particular, examining this setting will shed light on the limits of the defense against model extraction.

The goal of the defender is to build a surrogate model that maximizes the differences between predictions given the defender's distribution between the defender's true model and the attacker's model.
At the same time, the defender wants to keep almost the same prediction quality between the surrogate model and the true defender's model.
These two objectives lead us to formulate a constrained version of bilevel optimization.

\subsection{Expected BODAME}
We will denote the input space as $\Spacein$, the output space as $\Spaceout$, and the defender and attacker distribution as $\Qdist$ and
$\Pdist$, respectively, on $\Spacein$.
We set function spaces $\fSpace$, $\gSpace$, and $\hSpace$ as model classes.
In particular, $\gSpace$ and $\hSpace$ are parameterized function spaces with parameter spaces $\gParam$ and $\hParam$, respectively, that is, $\gSpace = \{\g_{\theta}: \Spacein \to \Spaceout \mid \gparam \in \gParam\}$ and $\hSpace = \{\h_{\hparam}: \Spacein \to \Spaceout \mid \hparam \in \hParam\}$.
The defender has the true model $\f \in \fSpace:\Spacein \to \Spaceout$.
Because a naive deployment of the true model has some risk of the true model being stolen, the defender prepares a parameterized surrogate model $\g_{\gparam} \in \gSpace$ so as to be $\epsquality$--approximation ($\epsquality \ge 0$) to the true model $\f$ on a loss function $\lconst: \Spaceout \times \Spaceout \to \mathbb{R}$ on the defender's distribution $\Qdist$.
The attacker builds a parameterized copy--model $\h_{\hparam} \in \hSpace$ of the true model $\g_{\gparam}$ as a result of minimizing the attacker's loss function $\lattacker: \Spaceout \times \Spaceout \to \mathbb{R}$ on the attacker's input distribution $\Pdist$ as (\ref{expected-inner-minimization}).
The defender wants to prevent the attacker from re--training by maximizing its objective loss function $\lobj: \Spaceout \times \Spaceout \to \mathbb{R}$ between $\f$ and $\h_{\hparam}$ on $\Qdist$ as (\ref{expected-objective}).
Accordingly, we obtain the following formulation:
\begin{align}
    \max_{\gparam \in \gParam} \quad & \E_{\rvin \sim \Qdist} [\lobj(\f(\rvin), \hargmin(\rvin))], \label{expected-objective} \\
    \mathrm{s.t.} \quad & \wargmin = \argmin_{\hparam \in \hParam} \E_{\rvin \sim \Pdist} [\lattacker(\g_{\gparam}(\rvin), \h_{\hparam}(\rvin))], \label{expected-inner-minimization}  \\
    \quad &  \E_{\rvin \sim \Qdist} [\lconst(\f(\rvin), \g_{\gparam}(\rvin)) ]  \le \epsquality. \label{expected-constraint-quality}
\end{align}
Condition (\ref{expected-constraint-quality}) originates from the motivation that the defender does not want to make the surrogate model worse in order to guarantee the quality of the service for users who possess benign data following the true distribution $\Qdist$.
We call this formulation the expected Bilevel Optimization for Defense Against Model Extraction (BODAME) framework.
The resulting attacker model is denoted as $\hargmin$ and the surrogate model is denoted as $\g_{\gparam}$, where $\gparam$ is a solution of the problem (\ref{expected-objective})--(\ref{expected-constraint-quality}).

\subsection{Two extreme cases for expected BODAME}
We are now interested in this question: ``What are the circumstances in which the defender succeeds or fails in defending against model extraction?'', which can be rephrased as ``How large or small can the optimal value of expected BODAME be?''
The functions $\g$ and $\h$ need to be parameterized as in (\ref{expected-objective}), (\ref{expected-inner-minimization}), and (\ref{expected-constraint-quality}), but we will omit the notation in this subsection in order to focus on properties of the model as functions.
Here, we call $\lattacker$ a {\em p-distance} function in this subsection if it satisfies partial conditions of distance functions that wherein for all $\queryout, \queryout' \in \Spaceout, \lattacker(\queryout, \queryout') \ge 0$ and $\queryout = \queryout'$ if and only if $ \lattacker(\queryout, \queryout') = 0$.
First, we show a sufficient condition for a defender not to make an adequate defense.
\begin{theorem}[Sufficient condition for almost $0$ optimal value]
    Let $\lattacker$ be a p-distance function and $\lobj = \lconst$, and also suppose that $\supp(\Qdist) = \supp(\Pdist)$ for distributions and $\gSpace = \hSpace$ for function spaces. Then, the optimal value (\ref{expected-objective}) is upper bounded by $\epsquality$ for any $\f \in \fSpace$. \label{thm:trivial-case}
\end{theorem}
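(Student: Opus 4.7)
The plan is to exploit the fact that the attacker's hypothesis class coincides with the defender's, so the attacker can perfectly imitate any surrogate. First I would argue that, for any feasible $\g$, the choice $\h = \g$ is admissible in the attacker's inner problem (\ref{expected-inner-minimization}) since $\gSpace = \hSpace$, and it drives the attacker's objective to zero because $\lattacker(\queryout,\queryout)=0$ by the p-distance property. As $\lattacker \ge 0$, zero is the minimum, so every minimizer $\hargmin$ of (\ref{expected-inner-minimization}) also achieves
\begin{equation*}
\E_{\rvin \sim \Pdist}[\lattacker(\g(\rvin), \hargmin(\rvin))] = 0.
\end{equation*}

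Next I would convert the vanishing expected loss into pointwise equality of predictions. Non-negativity of $\lattacker$ together with the integral being zero forces $\lattacker(\g(\rvin), \hargmin(\rvin)) = 0$ $\Pdist$-almost everywhere, and the p-distance property then gives $\g(\rvin) = \hargmin(\rvin)$ $\Pdist$-a.e., hence on a $\Pdist$-full-measure subset of $\supp(\Pdist)$. Using $\supp(\Pdist) = \supp(\Qdist)$ (together with mild regularity of the parameterized families making the equality extend to the whole common support), this identification lifts to $\g(\rvin) = \hargmin(\rvin)$ $\Qdist$-almost everywhere.

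Finally I would plug this identification into the outer objective. Using $\lobj = \lconst$ and constraint (\ref{expected-constraint-quality}),
\begin{equation*}
\E_{\rvin \sim \Qdist}[\lobj(\f(\rvin), \hargmin(\rvin))] \;=\; \E_{\rvin \sim \Qdist}[\lconst(\f(\rvin), \g(\rvin))] \;\le\; \epsquality.
\end{equation*}
Since this bound holds for every feasible $\g$, it holds at the maximizer, proving the theorem.

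The main obstacle is the measure-theoretic step that transports the $\Pdist$-almost-everywhere equality of $\g$ and $\hargmin$ to a $\Qdist$-almost-everywhere equality: equality of topological supports does not by itself equate null sets. To make this watertight one either assumes mutual absolute continuity of $\Pdist$ and $\Qdist$ on the common support, or exploits continuity of the model maps $\g_{\gparam}$ and $\h_{\hparam}$ so that equality on a $\Pdist$-dense subset of $\supp(\Pdist)=\supp(\Qdist)$ extends by continuity to the whole support. Both conditions hold in the kernel and differentiable model settings of interest, so the stated bound follows.
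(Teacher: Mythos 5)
Your proof follows essentially the same route as the paper's: force the attacker's optimal loss to zero using $\gSpace=\hSpace$ and $\lattacker\ge 0$, deduce $\hargmin=\g$ $\Pdist$-a.e.\ from the p-distance property, transfer this to $\Qdist$-a.e.\ via the equal supports, and conclude from $\lobj=\lconst$ and the constraint (\ref{expected-constraint-quality}). The measure-theoretic caveat you flag is real --- the paper's own proof simply asserts that $\supp(\Qdist)=\supp(\Pdist)$ makes the two a.e.\ statements equivalent, which is false for general measures with equal topological supports --- so your explicit identification of the extra regularity needed (mutual absolute continuity, or continuity of the model maps) is if anything more careful than the original.
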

We give a proof of \cref{thm:trivial-case} in the supplementary material.
\cref{thm:trivial-case} implies that since the attacker can imitate the defender's model almost surely on $\Pdist$ by using a p-distance loss function such as a squared loss function and the attacker has the distribution in the same region with defender's one (i.e., $\supp(\Qdist) = \supp(\Pdist)$), the defender fails to defend against attacks because of the constraint (\ref{expected-constraint-quality}).
Next, we show an example where a defender can defend against an attacker in a regression setting.
\begin{theorem}[Sufficient condition for infinitely large optimal value]
   Let $\lattacker$ be a p-distance function and 
   suppose that $\Spacein = \Spaceout = \mathbb{R}$,
   $\lobj(\queryout, \queryout') = \lconst(\queryout, \queryout') = (\queryout - \queryout')^{2}$, $\Qdist$ and $\Pdist$ are absolutely continuous measures on $\mathbb{R}$, $\supp(\Qdist) = [a, b], \supp(\Pdist) = [c, d], (a < b \le c < d)$, $\fSpace$ and $\hSpace$ are $\lipconstant$-Lipschitz function spaces and $\gSpace = \{\g: \mathbb{R} \to \mathbb{R}\}$.
   Then, the optimal value of (\ref{expected-objective}) becomes $\infty$.
   \label{thm:infty-case}
\end{theorem}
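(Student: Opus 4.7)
The plan is to exhibit a one-parameter family of feasible surrogates $\g_M$, indexed by $M \to \infty$, for which the outer objective diverges. The key leverage is the asymmetry between $\gSpace$ (all real-valued functions) and $\hSpace$ ($\lipconstant$-Lipschitz only), together with the fact that the two supports $[a,b]$ and $[c,d]$ are disjoint or touch at just one point. Concretely, I would define $\g_M(x) = \f(x)$ for $x \in [a,b]$, $\g_M(x) = M$ for $x \in [c,d]$, and extend $\g_M$ arbitrarily elsewhere. Then $\g_M \in \gSpace$, and $\g_M$ satisfies \eqref{expected-constraint-quality} with value $0 \le \epsquality$ since it coincides with $\f$ on $\supp(\Qdist)$.

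Next I would analyse the inner problem. Because $\lattacker$ is a p-distance the inner objective is nonnegative, and the constant function $\h \equiv M$ is trivially $\lipconstant$-Lipschitz and achieves value $0$. Hence the minimum equals $0$ and any minimizer $\h^{*}$ must satisfy $\h^{*} = \g_M = M$ almost everywhere w.r.t.\ $\Pdist$ on $[c,d]$. Combining the absolute continuity of $\Pdist$ (whose support is $[c,d]$, so the equality set is dense there) with the continuity of the $\lipconstant$-Lipschitz function $\h^{*}$, we conclude $\h^{*}(x) = M$ for every $x \in [c,d]$, and in particular $\h^{*}(c) = M$. The Lipschitz bound then propagates this back to $[a,b]$: $\h^{*}(x) \ge M - \lipconstant(c-x) \ge M - \lipconstant(c-a)$ for all $x \in [a,b]$. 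Since $\f$ is $\lipconstant$-Lipschitz on the compact interval $[a,b]$ it is bounded there by some $C_\f < \infty$, so for $M$ sufficiently large we get $(\f(x) - \h^{*}(x))^{2} \ge (M - \lipconstant(c-a) - C_\f)^{2}$ pointwise on $[a,b]$. Integrating against $\Qdist$ shows that the outer objective at $\g_M$ is at least this quantity, which tends to $\infty$.

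The main subtle point I anticipate is that the attacker's argmin is typically non-unique: the inner loss only sees $\h$ on $\supp(\Pdist) = [c,d]$, so many $\lipconstant$-Lipschitz extensions of the constant $M$ to all of $\mathbb{R}$ tie for the minimum. The argument above is robust to how the bilevel formulation resolves this ambiguity, because the transfer to $[a,b]$ relies only on two ingredients that hold for \emph{every} element of $\wargmin$ once $\g = \g_M$: the globally enforced Lipschitz constant $\lipconstant$ of $\hSpace$ and the pinned value $\h^{*}(c) = M$. Hence the divergence of \eqref{expected-objective} along the sequence $\g_M$ holds independently of any selection rule, proving the claim.
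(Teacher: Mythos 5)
Your proof is correct and follows essentially the same route as the paper's: take a surrogate that coincides with $\f$ on $\supp(\Qdist)$ and is pushed arbitrarily far away on $\supp(\Pdist)$, observe that the attacker's zero-loss minimizers must match it there, and use the $\lipconstant$-Lipschitz constraint on $\hSpace$ (and on $\f$) to propagate the large gap back onto $[a,b]$. Your pointwise lower bound via $\h^{*}(c)=M$ combined with the boundedness of $\f$ on the compact $[a,b]$ is a cleaner way to finish than the paper's comparison against extremal Lipschitz functions, and your explicit treatment of the non-uniqueness of the inner $\argmin$ addresses a point the paper leaves implicit.
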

We give a proof of \cref{thm:infty-case} in the supplementary material.
\cref{thm:infty-case} implies that the defender can mount a defense by making a surrogate model which is largely different from the true model only in $\Pdist$; therefore, the difference between the supports of $\Pdist$ and $\Qdist$ is a crucial condition.
We can easily extend \cref{thm:infty-case} to the multi--dimensional input space.

\subsection{Empirical batch BODAME}
\label{subsec:empirical-batch-bodame}

For practical purposes, we will consider an empirical setting.
Suppose that we have i.i.d. samples of the attacker $\{\xattackersample\}_{\idxattacker=1}^{\nattacker}$ ($\xattackersample \sim \Pdist$, $\nattacker \in \mathbb{N})$, i.i.d. samples of the defender used in the constraint $\{\xconstsample\}_{\idxconst=1}^{\nconst}$ ($\xconstsample \sim \Qdist$, $\nconst \in \mathbb{N}$), and i.i.d. samples of the defender used in the objective function $\{\xobjsample\}_{\idxobj=1}^{\nobj}$ ($\xobjsample \sim \Qdist$, $\nobj \in \mathbb{N}$) instead of the attacker's and defender's distributions.
The corresponding empirical batch BODAME is
\begin{align}
 \hspace{-2mm}   \max_{\gparam \in \gParam} \quad & \hspace{-1mm} \frac{1}{\nobj} \sum_{\idxobj=1}^{\nobj} \lobj(\f(\xobjsample), \hargmin(\xobjsample)),  \label{objective}\\
 \hspace{-2mm}   \mathrm{s.t.}  \quad & \hspace{-1mm}\wargmin = \argmin_{\hparam
    \in \hParam} \frac{1}{\nattacker} \sum_{\idxattacker=1}^{\nattacker} \lattacker(\g_{\gparam}(\xattackersample), \h_{\hparam}(\xattackersample)), \label{inner_minimization} \\
\hspace{-2mm}    \quad & \hspace{-1mm}\frac{1}{\nconst} \sum_{\idxconst=1}^{\nconst} \lconst(\f(\xconstsample), \g_{\gparam}(\xconstsample))  \le \epsquality. \label{constraint_quality}
\end{align}
The resulting optimization problem is difficult to solve; it is a non--convex problem due to (\ref{objective}) and an optimization problem (\ref{inner_minimization}) is included as a constraint. 
In the next two sections, we provide methods for solving the problem.
\section{BODAME--KRR/KR}
\label{sec:kernel-bodame}
We consider empirical batch BODAME, where we assume that the defender uses a kernel regression model and the attacker uses a kernel ridge regression model.
Although the optimization problem is still non--convex, we propose a polynomial--time algorithm with an easy implementation to find the global optimum for the non--convex problem.

\subsection{1--QCQP formulation}
\label{subsec:kernel-formulation}
On the basis of the representer theorem (see e.g. Theorem 4.2. in \citep{scholkopf2002learning}), we assume that the true model $\f$ can be expressed as $\f(\queryin) = \sum_{\idxtrain=1}^{\ntrain} \gparam^{*}_{\idxtrain} \ktrue(\queryin, \xtrainsample)$ where $\gparam^{*} \in \mathbb{R}^{\ntrain}$, $\ktrue : \Spacein \times \Spacein \to \mathbb{R}$ is a positive--definite kernel, $\ntrain \in \mathbb{N}$, $\{\xtrainsample\}_{\idxtrain=1}^{\ntrain}$ are training samples which are identically and independently distributed in $\Qdist$.
Moreover, we assume a surrogate model $\g_{\gparam} (\queryin) = \sum_{\idxtrain=1}^{\ntrain} \gparam_{\idxtrain} \ktrue(\queryin, \xtrainsample)$ ($\gparam \in \mathbb{R}^{\ntrain}$), which means that the surrogate model uses the same training data and kernel as the true model.
Further, we assume that the attacker's model is a kernel ridge regression model, but the attacker does not precisely know the defender's kernel in general.
Therefore, the attacker's model $\h_{\hparam}$ can be expressed as $\sum_{\idxattacker=1}^{\nattacker} \hparam_{\idxattacker} \kattacker(\queryin, \xattackersample)$, where $\hparam \in \mathbb{R}^{\nattacker}$ and $\kattacker$ is also a positive--definite kernel different in general from the defender's kernel in general.
The attacker's loss function is $\lattacker(\g_{\gparam}(\queryin), \h_{\hparam} (\queryin)) = ( \g_{\gparam}(\queryin) - \h_{\hparam} (\queryin))^{2} + \lambdaattacker \|\h_{\hparam}\|^2_{\Hilbert}$ where $\lambdaattacker > 0$ and $\|\cdot\|_{\Hilbert}$ denotes the norm induced from the reproducing kernel Hilbert space $\Hilbert$ corresponding to $\kattacker$.
Here, we assume that the defender knows some of the attacker's information, as follows:
\begin{remark}[Defender's knowledge about the attacker's kernel model]
  The defender knows that the definitions of the functions $\lattacker$ and $\h_{\hparam}$, including the hyperparameter value $\lambda$
  and  the attacker's samples $\{\xattackersample\}_{\idxattacker=1}^{\nattacker}$.
  \label{kernel-attacker-assumption}
\end{remark}

Based on \cref{kernel-attacker-assumption}, lower--level optimization (\ref{inner_minimization}) is tractable with an analytic solution:
\begin{align}
    \wargmin = (K_{1} + \lambdaattacker I_{\nattacker})^{-1} K_{2} \gparam, \label{w_kernel_ridge}
\end{align}
where $K_{1} = (\kattacker(\xattackersample, \queryin_{\idxattacker'}^{\attacker}))_{\idxattacker=1, \idxattacker'=1}^{\nattacker, \nattacker} \in \mathbb{R}^{\nattacker \times \nattacker}$ and $K_{2} = (\ktrue(\xattackersample, \xtrainsample))_{\idxattacker=1, \idxtrain=1}^{\nattacker, \ntrain} \in \mathbb{R}^{\nattacker \times \ntrain}$.
We should also note that $K_{3} = (\kattacker(\xobjsample, \xattackersample))_{\idxobj=1, \idxattacker=1}^{\nobj, \nattacker} \in \mathbb{R}^{\nobj \times \nattacker}$, $K_{4} = (\ktrue(\xconstsample, \xtrainsample))_{\idxconst=1, \idxtrain=1}^{\nconst, \ntrain} \in \mathbb{R}^{\nconst \times \ntrain}$, $\f_{1} = (\f(\xobjsample))_{\idxobj=1}^{\nobj} \in \mathbb{R}^{\nobj}$ and $\f_{2} = (\f(\xconstsample))_{\idxconst=1}^{\nconst} \in \mathbb{R}^{\nconst}$.
We use the following matrix and vector notations:
\begin{align*}
&\tilde{\A} = K_{3}  (K_{1} + \lambdaattacker I_{\nattacker})^{-1} K_{2}, \ 
\A = (1/\nobj)  \tilde{\A}^{\top}  \tilde{\A},\\
&\veca = (1/\nobj) \tilde{\A}^{\top} \f_{1}, \gammaa = (1/\nobj)  \f_{1}^{\top} \f_{1}, \ \B = (1/\nconst) K_{4}^{\top} K_{4}, \\
&\vecb = (1/\nconst) K_{4}^{\top} \f_{2}, \ 
\gammab = (1/\nconst) \f_{2}^{\top} \f_{2}.
\end{align*}
After putting (\ref{w_kernel_ridge}) into (\ref{objective}) and assuming that $\lobj$ and $\lconst$ are both squared loss functions, we can rewrite empirical batch BODAME
(\ref{objective})--(\ref{constraint_quality}) as follows:
\begin{align}
    \max_{\gparam \in \mathbb{R}^{\ntrain}} \quad & \gparam^{\top} \A \gparam - 2 \veca^{\top} \gparam + \gammaa, \label{kernel-objective}\\
    \mathrm{s.t.} \quad & \gparam^{\top} \B \gparam - 2 \vecb^{\top} \gparam + \gammab \le \epsquality. \label{kernel-constraint}
\end{align}
We can easily check that $\A$ and $\B$ are positive semi--definite and thereby that this problem is a non--convex 1--quadratically constrained quadratic program (1--QCQP).
We know a feasible solution exists because the parameter of the true model $\gparam^{*}$ always satisfies (\ref{kernel-constraint}).
We call this optimization problem \textbf{BODAME--KRR/KR} since the attacker uses a kernel ridge regression (KRR) model and the defender uses a kernel regression (KR) model.

\subsection{Algorithm for BODAME--KRR/KR}
We use an efficient algorithm \citep{adachi2017solving} with an easy implementation based on solving a generalized eigenvalue problem. To make the algorithm applicable to problem (\ref{kernel-objective}) and (\ref{kernel-constraint}), we use the following assumption:
\begin{assumption}
$\B$ is positive--definite. \label{B-PDS-assumption}
\end{assumption}
Under Assumption \ref{B-PDS-assumption}, we obtain the following formulation by changing a parameter from $\gparam$ to $\hat{\gparam} = \gparam - \B^{-1} \vecb$:
\begin{align}
    \min_{\|\hat{\gparam}\|_{\B} \le \hat{\epsquality}} \quad \frac{1}{2} \hat{\gparam}^{\top} \hat{\A} \hat{\gparam} +  \hat{\veca}^{\top} \hat{\gparam}, \label{acutual-problem}
\end{align}
where $\| \hat{\gparam} \|_{\B} = \sqrt{\hat{\gparam}^{\top} \B \hat{\gparam}}$, $ \hat{\A} = - 2 \A$, $\hat{\veca} = 2 \veca - 2 \A \B^{-1} \vecb$,  and $\hat{\epsquality} = \sqrt{\epsquality + \vecb^{\top} \B^{-1} \vecb - \gammab}$.

Since the problem (\ref{acutual-problem}) is a concave function minimization on a nonempty compact convex set, it is ensured that the optimal solution exists at the boundary of the inequality constraint (see, e.g., Theorem 1.1, \citep{horst1996global}). We can reduce some of the procedures in the original algorithm by exploiting the concave property to obtain the following \cref{global_opt_kernel}. Suppose that $\eigenspace$ is the eigenspace of $\A$ and $- \B$ corresponding to the rightmost generalized eigenvalue.
\begin{theorem}[Global optimization for BODAME--KRR/KR where $\hat{\veca} \not\perp \eigenspace$]
On the basis of \cref{kernel-attacker-assumption}, if \cref{B-PDS-assumption} and $\hat{\veca} \not\perp \eigenspace$ hold, then the defender can find the global optimum of the problem (\ref{kernel-objective}) and (\ref{kernel-constraint}), which is equivalent to (\ref{acutual-problem}) by applying \cref{kernel_bodame_algo}. \label{global_opt_kernel}
\end{theorem}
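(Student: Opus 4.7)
The plan is to recognize (\ref{acutual-problem}) as an ellipsoid-constrained, indefinite quadratic program—a generalization of the trust-region subproblem—and to solve it by porting the dual generalized-eigenvalue approach of \citep{adachi2017solving} to the $\B$-norm setting.

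Since $\hat{\A}=-2\A$ is negative semi-definite, the objective of (\ref{acutual-problem}) is concave, so by the cited theorem of \citep{horst1996global} the optimum lies on the boundary and I may replace the inequality constraint by the equality $\|\hat{\gparam}\|_{\B}=\hat{\epsquality}$ without changing the optimal value. Forming the Lagrangian and differentiating gives the KKT conditions
\begin{align*}
(\hat{\A}+2\mu\B)\hat{\gparam}=-\hat{\veca},\qquad \hat{\gparam}^{\top}\B\hat{\gparam}=\hat{\epsquality}^{2},\qquad \hat{\A}+2\mu\B\succeq 0,\qquad \mu\ge 0,
\end{align*}
where the curvature condition is the standard necessary second-order condition for a global minimizer in the trust-region literature. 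Because $\hat{\A}=-2\A$, this curvature condition forces $\mu$ to be at least the rightmost generalized eigenvalue $\eigenvalue$ of the pencil $(\A,-\B)$.

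Next I would reduce to the Euclidean setting of \citep{adachi2017solving} via the square-root substitution $\hat{\gparam}\mapsto\B^{1/2}\hat{\gparam}$, which is well-defined by \cref{B-PDS-assumption} and turns the ellipsoid $\|\hat{\gparam}\|_{\B}=\hat{\epsquality}$ into a Euclidean sphere. Their Theorem then yields a symmetric definite $2\ntrain\times 2\ntrain$ block pencil whose rightmost generalized eigenvalue equals the optimal multiplier $\mu^{*}$ and whose corresponding eigenvector, after the normalization dictated by the boundary constraint, recovers $\hat{\gparam}^{*}$. Pulling this back through the substitution produces the block pencil actually used by \cref{kernel_bodame_algo}, with $\B$ in place of the identity in the quadratic blocks.

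The hypothesis $\hat{\veca}\not\perp\eigenspace$ is precisely the ``easy case'' of the Adachi framework: it prevents the degeneracy in which the rightmost pencil eigenvalue coincides with $\eigenvalue$ while $\hat{\veca}$ lies in $\eigenspace^{\perp}$. In the easy case the linear system $(\hat{\A}+2\mu^{*}\B)\hat{\gparam}=-\hat{\veca}$ is consistent and the recovery of $\hat{\gparam}^{*}$ from the eigenvector needs no deflation correction—this is exactly the simplification alluded to by ``we can reduce some of the procedures.'' Undoing the change of variables $\gparam^{*}=\hat{\gparam}^{*}+\B^{-1}\vecb$ yields the claimed global optimum of (\ref{kernel-objective})--(\ref{kernel-constraint}), and \cref{kernel_bodame_algo} implements exactly these four steps: centering, pencil construction, one call to a generalized-eigenvalue solver returning the rightmost eigenpair, and back-substitution.

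The main obstacle I anticipate is verifying that the block-pencil construction of \citep{adachi2017solving}, stated for the Euclidean ball, transports cleanly to the $\B$-ellipsoid via the square-root map, and that selecting the rightmost eigenvalue of the resulting pencil automatically certifies both $\mu^{*}\ge 0$ and the curvature condition $\hat{\A}+2\mu^{*}\B\succeq 0$; without these, one could recover a KKT point that is only a saddle of the Lagrangian rather than the global minimizer. A secondary, but mostly bookkeeping, step is showing rigorously that $\hat{\veca}\not\perp\eigenspace$ is equivalent to the non-collision/non-degeneracy condition that characterizes the easy case in \citep{adachi2017solving}.
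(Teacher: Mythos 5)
Your proposal follows essentially the same route as the paper's proof: concavity of the objective (since $\hat{\A}=-2\A\preceq O$) plus the Horst--Tuy theorem to push the optimum onto the boundary $\|\hat{\gparam}\|_{\B}=\hat{\epsquality}$, then an appeal to the generalized-eigenvalue results of \citep{adachi2017solving} (their Theorems 3.2 and 3.3, which already cover the ellipsoidal-norm case, so your $\B^{1/2}$ reduction is unnecessary but harmless) under the easy-case condition $\hat{\veca}\not\perp\eigenspace$, followed by the back-substitution $\gparam_{\mathrm{opt}}=\hat{\gparam}_{\mathrm{opt}}+\B^{-1}\vecb$. The only details the paper adds that you omit are the explicit check that the feasible set is nonempty (via the feasible point $\gparam^{*}-\B^{-1}\vecb$) and the sign/normalization of the recovered eigenvector, both minor.
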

We give the proof of \cref{global_opt_kernel} in the supplementary material.
We assume $\hat{\veca} \not\perp \eigenspace$ in \cref{global_opt_kernel} which leads to a simplified algorithm, but this assumption is not necessary if we use Algorithm 2 in \citep{adachi2017solving} (See the supplementary material).

\begin{algorithm}
\caption{Algorithm for BODAME--KRR/KR by solving generalized eigenvalue problems}
\label{kernel_bodame_algo}
\begin{algorithmic}[1]
    \REQUIRE $\A$, $\veca$, $\B$, $\epsquality$, $\vecb$, $\gammab$
    \ENSURE Surrogate parameter $\gparam_{\mathrm{opt}}$
        \STATE Compute $\B^{-1} \vecb, \hat{\A}, \hat{\veca}, \hat{\epsquality}$
        \STATE Compute the smallest $\eigenvalue \in \mathbb{R}$ and (divided) eigenvectors $\eigenvec_1, \eigenvec_2 \in \mathbb{R}^\ntrain$ of following problem:
        \begin{align}
            \begin{pmatrix} - \B & \hat{\A} \\ \hat{\A} & - \frac{\hat{\veca}\hat{\veca}^{\top}}{\hat{\epsquality}^2}\end{pmatrix} \begin{pmatrix} \eigenvec_1 \\ \eigenvec_2 \end{pmatrix} = \eigenvalue \begin{pmatrix} O & \B \\ \B & O \end{pmatrix} \begin{pmatrix} \eigenvec_1 \\ \eigenvec_2\end{pmatrix} \label{kernel_eigen_prob}
        \end{align} 
        \STATE $\hat{\gparam}_{\mathrm{opt}} = - \mathrm{sgn} (\hat{\veca}^{\top} \eigenvec_2) \hat{\epsquality} \frac{\eigenvec_1}{\|\eigenvec_1\|_{\B}}$
        \STATE $\gparam_{\mathrm{opt}} = \hat{\gparam}_{\mathrm{opt}} + \B^{-1} \vecb$
        \RETURN $\gparam_{\mathrm{opt}}$
\end{algorithmic}
\end{algorithm}
For the inner computation of (\ref{kernel_eigen_prob}), we can use some algorithms such as the Arnoldi methods (see e.g. \citep{lehoucq1998arpack}) to find the eigenvectors of smallest generalized eigenvalue problems.
The computational complexity of \cref{kernel_bodame_algo} is $\mathcal{O}(\ntrain^{3})$, where $\ntrain$ denotes the dimension of $\gparam_{\mathrm{opt}}$.
 
\section{BODAME--SGD/SGA}
\label{sec:sgd-bodame}
While an attacker can find an analytic solution for the lower level optimization (\ref{inner_minimization}) in BODAME in the previous setting, it may be difficult to do so in more general cases. 
In an application, an attacker often learns a defender's model by using a few empirical samples by stochastic gradient descent (SGD) that stops in a few steps.
In this section, we assume that we can explicitly obtain the attacker's gradient.
We give an algorithm based on gradient ascent using such knowledge on the SGD attacker.
Compared with BODAME--KRR/KR, the algorithm can be applied to a wider range of defender models since it requires only differentiability of the models of the attacker and defender.

\subsection{Formulation assuming SGD attacker}
\label{subsec:sgd-formulation}

Recent machine learning models are trained with (mini--batch) SGD--based algorithms (with momentum) \citep{robbins1951stochastic, duchi2011adaptive, KingmaB2014adam}. Suppose that $\hparam^{(\maxstepinner)}(\gparam)$ is a parameter learned by an SGD--based algorithm in $\maxstepinner \in \mathbb{N}$ steps. More precisely, for $\stepinner = 0, \dots, \maxstepinner-1$, $\hparam^{(\stepinner+1)}$ is recurrently defined as 
\begin{align}
    \hparam^{(\stepinner+1)}(\gparam) = \hparam^{(\stepinner)}(\gparam) - \lrinner_{\stepinner+1} \nabla_{\hparam} \frac{1}{|\minibatchattacker_{\stepinner}|} \sum_{\idxattacker \in \minibatchattacker_{\stepinner}} \lattacker(\g_{\gparam}(\xattackersample), \h_{\hparam^{(\stepinner)}}(\xattackersample)),
\end{align}
where $\lrinner_{\stepinner+1} > 0$ is the learning rate, $\minibatchattacker_{\stepinner}$ means mini--batch data indices of $\{\xattackersample\}$, and $|\minibatchattacker_{\stepinner}|$ means the number of elements of $\minibatchattacker_{\stepinner}$, and $\hparam_{0} \in \hParam$, $\hparam^{(0)}(\gparam) = \hparam_{0}$ is given as an initial value. Here we use the same notation as in \cref{subsec:empirical-batch-bodame}. We summarize these update formulae below:
\begin{align}
    \hparam^{(\stepinner+1)} = \update_{\stepinner} (\hparam^{(\stepinner)}, \gparam),
\end{align}
where $\update_{\stepinner}: \hParam \times \gParam \to \hParam$. We assume that an attacker generates sequences of $\hparam$ by using update formulae $\update_{\stepinner}$ from some initial point since the attacker does not have analytic solutions for the lower--level optimization problem (\ref{expected-inner-minimization}).
Furthermore, we assume that the defender knows the attacker's settings as follows:
\begin{remark}[Defender's knowledge about a SGD attacker]
A defender knows $\update_{\stepinner}$ ($\stepinner = 0, \dots, \maxstepinner-1$), $\hparam_{0}$ and attacker's samples $\{\xattackersample\}_{\idxattacker=1}^{\nattacker}$. \label{sgd-assumption} 
\end{remark}
\cref{sgd-assumption} means that the defender knows $\hparam_{0}$, $\minibatchattacker_{\stepinner}$, $\lrinner_{\stepinner+1}$ $(\stepinner = 0, \dots, \maxstepinner-1)$, $\lattacker$, $\h_{\hparam}$ and $\hParam$, as well as the SGD--update formula in the SGD setting. Under \cref{sgd-assumption}, we transform the original expected BODAME (\ref{expected-objective}) (\ref{expected-inner-minimization}) (\ref{expected-constraint-quality}) as follows: 
\begin{align}
    \max_{\gparam \in \gParam} \quad &  \E_{\rvin \sim \Qdist} [\lobj(\f(\rvin), \h_{\hparam^{(\maxstepinner)}(\gparam)}(\rvin))], \label{sgd-1-objective}\\
    \mathrm{s.t.} \quad & \hparam^{(0)} = \hparam_{0}, \label{sgd-2-initial}\\
    \quad & \hparam^{(\stepinner+1)} = \update_{\stepinner} (\hparam^{(\stepinner)}, \gparam) \quad (\stepinner = 0 \dots, \maxstepinner - 1), \label{sgd-3-update}\\
    \quad &\E_{\rvin \sim \Qdist} [ \lconst(\f(\rvin), \g_{\gparam}(\rvin))] \le \epsquality. \label{sgd-4-constraint}
\end{align}
We can easily confirm that this formulation is equivalent to the original expected BODAME (\ref{expected-objective})--(\ref{expected-constraint-quality}) if $\lim_{\stepinner \to \infty} \hparam^{(\stepinner)} = \argmin_{\hparam \in \hParam} \E_{\rvin \sim \Pdist} [\lattacker(\g_{\gparam}(\rvin), \h_{\hparam}(\rvin))]$.

\subsection{Algorithm for BODAME--SGD/SGA}
We empirically optimize this problem by using an algorithm based on mini--batch Stochastic Gradient Ascent (SGA) w.r.t. $\gparam$.
We assume that $\gparam \in \mathbb{R}^{\dgParam}$ and $\hparam \in \mathbb{R}^{\dhParam}$, where $\dgParam, \dhParam \in \mathbb{N}$.
We introduce an objective loss function in each step, 
\begin{align}
    \Lobj_{\stepouter} (\gparam) = \frac{1}{|\minibatchobj_{\stepouter}|} \sum_{\idxobj \in \minibatchobj_{\stepouter}} \lobj (\f(\xobjsample), \h_{\hparam^{(\maxstepinner)}(\gparam)}(\xobjsample)),
\end{align}
where max iteration $\maxstepouter \in \mathbb{N}$ and steps $\stepouter =0, \dots, \maxstepouter - 1$, $\minibatchobj_{\stepouter}$ are indices of a subset of $\{\xobjsample\}_{\idxobj=1}^{\nobj}$.
We need to use the gradient of this loss function to apply a gradient ascent method.
Here, we assume that $\lobj, \lconst, \g, \h$ and $\update_{\stepinner} \quad (\stepinner = 0, \dots, \maxstepinner - 1)$ are differentiable.
In fact, the computation of $ \nabla_{\gparam} \Lobj_{\stepouter}(\gparam)$ is directly related to meta--learning or hyperparameter optimization, where gradients can be computed by using Hypergradient \citep{maclaurin2015gradient, franceschi2017forward, franceschi2018bilevel}.
We will use forward computation on $\stepinner$ to calculate Hypergradient.
We describe the batch constraint function corresponding to (\ref{sgd-4-constraint}) as $\Lconst(\gparam) \le \epsquality$, where
\begin{align}
    \Lconst (\gparam) = \frac{1}{\nconst} \sum_{\idxconst = 1}^{\nconst} \lconst (\f(\xconstsample), \g_{\gparam}(\xconstsample)).
\end{align}
To remove (\ref{sgd-4-constraint}), we use the log--barrier method (see e.g. \citep{forsgren2002interior}), adding a barrier function $\regconst \log(\epsquality-\Lconst(\gparam) )$ with the barrier parameter $\regconst$ to the objective function.
In so doing, we expect that, without having $\Lconst(\gparam) \le \epsquality$ as a constraint, this constraint must hold by having \cref{sgd-bodame-algo} start from a strictly feasible point $\gparam_{0}$ as a following assumption:
\begin{assumption}
    The initial point $\gparam_{0}$ is strictly feasible (i.e. $\Lconst(\gparam_{0}) < \epsquality$). \label{interior-assumption}
\end{assumption}
It is enough to set $\gparam_{0} = \gparam^{*}$, where $\gparam^{*}$ denotes the true parameter of $\f$ to satisfy \cref{interior-assumption} when $\fSpace = \gSpace$.
The reason for using full--batch constraint instead of a mini--batch one is that we need to guarantee that all $\gparam^{(\stepouter)}$ (a parameter learned in step $\stepouter$) are strictly feasible on each $\stepouter$ without randomness.
The optimization problem (\ref{sgd-1-objective})--(\ref{sgd-4-constraint}) based on a differentiable assumption of each model is called \textbf{BODAME--SGD/SGA}, as the attacker uses a model which can be optimized by SGD and the defender uses a model which can be optimized by SGA.
We finally obtain \cref{sgd-bodame-algo}.

\begin{algorithm}[ht]
\caption{Algorithm for BODAME--SGD/SGA with log--barrier}
\label{sgd-bodame-algo}
\begin{algorithmic}[1]
    \REQUIRE Initial values $\gparam_{0}$ and $\hparam_{0}$, max iteration $\maxstepouter \in \mathbb{N}$, learning rates $\lrouter_{\stepouter} > 0$ ($\stepouter =1, \dots, \maxstepouter$), mini--batch indices $\minibatchobj_{\stepouter}$ ($\stepouter=1, \dots, \maxstepouter$), a barrier parameter $\regconst$ and an approximation parameter $\epsquality$.
    \ENSURE Surrogate parameter $\gparam_{\maxstepouter}$
        \STATE \texttt{\# Initialization}
        \STATE $\gparam^{(0)} = \gparam_{0}$
        \STATE $\hparam^{(0)} = \hparam_{0}$
        \FOR{$\stepouter=0$ to $\maxstepouter-1$}
            \STATE \texttt{\# Calculate $\nabla_{\gparam} \Lobj_{\stepouter}(\gparam^{(\stepouter)})$ using Hypergradient}
            \STATE $\Z_{\stepouter}^{(0)} = O_{\dhParam \dgParam}$
            \FOR{$\stepinner = 0$ to $\maxstepinner-1$}
                \STATE $\Z_{\stepouter}^{(\stepinner+1)} = \frac{\partial \update_{\stepinner}(\hparam^{(\stepinner)}, \gparam^{(\stepouter)})}{\partial \hparam} \Z_{\stepouter}^{(\stepinner)} + \frac{\partial \update_{\stepinner}(\hparam^{(\stepinner)}, \gparam^{(\stepouter)})}{\partial \gparam}$
                \STATE $\hparam^{(\stepinner+1)} = \update_{\stepinner}(\hparam^{(\stepinner)}, \gparam^{(\stepouter)})$
            \ENDFOR
            \STATE $\nabla_{\gparam} \Lobj_{\stepouter}(\gparam^{(\stepouter)}) = (\Z_{\stepouter}^{(\maxstepinner)})^{\top} \nabla_{\hparam} \Lobj_{\stepouter}(\gparam^{(\stepouter)})$
            \STATE \texttt{\# Guarantee the next point to be strictly feasible using backtracking}
            \STATE $\Lconst (\gparam^{(\stepouter+1)}) = \infty$
            \WHILE{$\Lconst (\gparam^{(\stepouter+1)}) \ge \epsquality$}
                \STATE $\gparam^{(\stepouter+1)} = \gparam^{(\stepouter)} + \lrouter_{\stepouter+1} (\nabla_{\gparam} \Lobj_{\stepouter}(\gparam^{(\stepouter)}) + \regconst \nabla_{\gparam} \log (\epsquality -  \Lconst (\gparam^{(\stepouter)})))$
                \STATE $\lrouter_{\stepouter+1}$ = $\lrouter_{\stepouter+1}/2$
            \ENDWHILE
            \STATE (optional: $\regconst \to 0$)
        \ENDFOR
        \RETURN $\gparam_{\maxstepouter}$
\end{algorithmic}
\end{algorithm}

\section{Experiments on BODAME--KRR/KR}
\label{sec:exp-bodame-krr-kr}
We performed experiments on BODAME--KRR/KR by changing the difference between the attacker's and defender's distribution to see if a large difference would provide enough room for an effective defense in our BODAME framework.
We also evaluated the generalization ability of our surrogate model against new queries from an attacker.

In what follow, Let $\f_{\gparam^{*}}$ denotes the true model and $\g_{\gparam_{\mathrm{opt}}}$ denotes the optimized surrogate model.
Furthermore, $\h_{\tilde{\hparam}(\gparam_{\mathrm{opt}})}$ denotes the attacker's model learned by using the outputs of the surrogate model $\g_{\gparam_{\mathrm{opt}}}$ as supervised data and $\h_{\tilde{\hparam}(\gparam^{*})}$ denotes the one learned by using the outputs of the true model $\f_{\gparam^{*}}$. 
That is, $\h_{\tilde{\hparam}(\gparam_{\mathrm{opt}})}$ corresponds to the attacker's model built in our BODAME framework and  $\h_{\tilde{\hparam}(\gparam^{*})}$ corresponds to undefended naive deployment. These settings are illustrated in \cref{fig:overview-exp}.

\begin{figure}[ht]
    \centering
    \includegraphics[width=10cm]{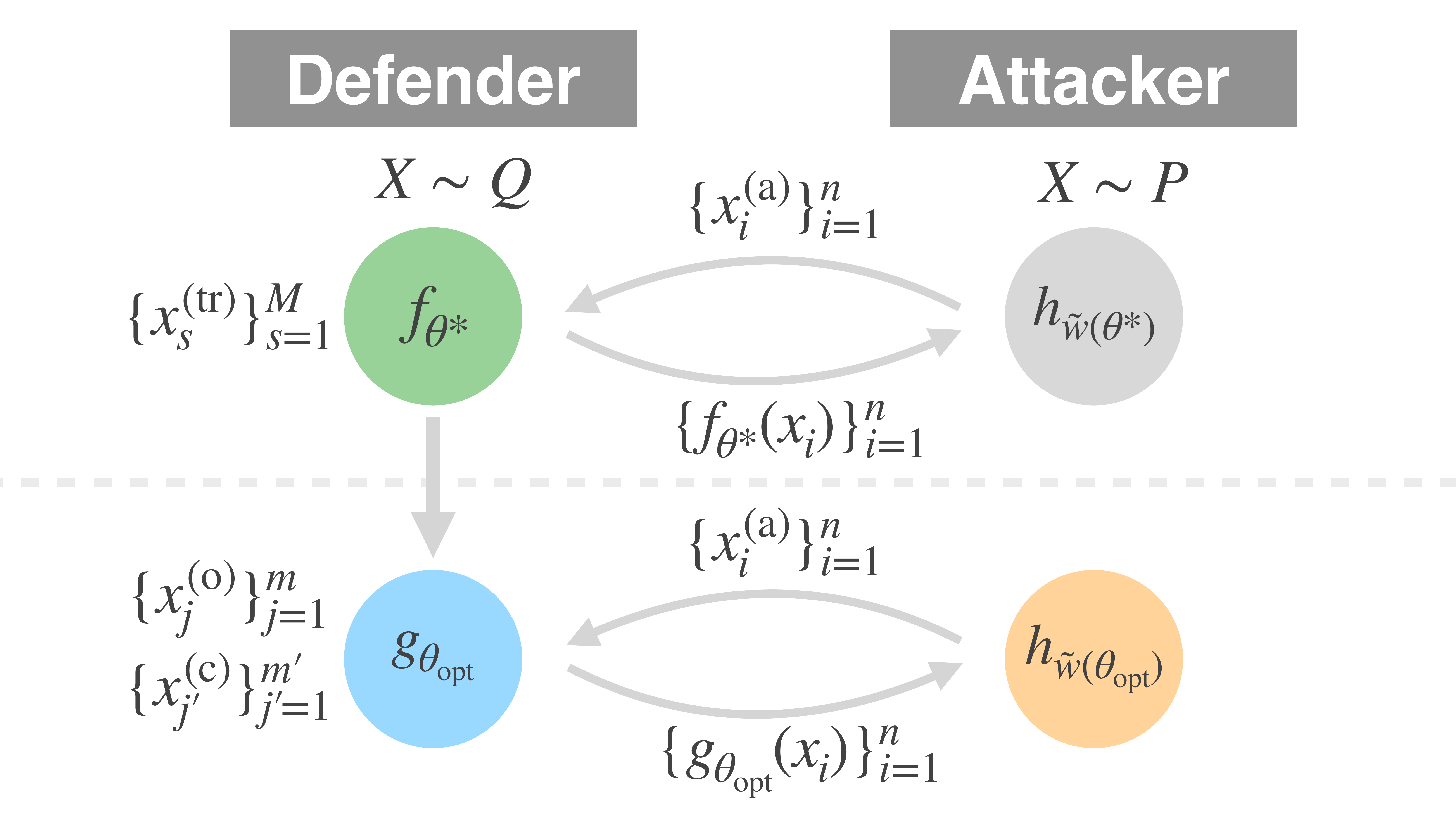}
    \caption{Attacker–defender framework in experiments.}
    \label{fig:overview-exp}
\end{figure}

We used white--wine data in the UCI wine dataset \citep{Dua:2019}.
The sample size was 4899, and the number of features was 11, excluding quality of wine.
We constructed models to predict the quality (3 -- 9) of wine in regression settings.
The experiments were conducted on an Intel Core i9 at 2.4 GHz.

\subsection{Effect of difference between distributions}
\label{subsec:dif-dist-kernel-exp}
We divided the wine dataset into defender's data, attacker's data, and test data.
To make the difference of two distributions, we shifted attacker's samples by adding noises that are normally distributed in $\mathcal{N}(\distmu \bm{1}_{11},  0.2^2 I_{11})$.
We set the parameters to $\ntrain = 350, \nattacker = 300, \nobj = 1000, \nconst = 1500, \epsquality = 0.1$, and $\lambdaattacker = 1.0$, and the test sample size is 500.
Both the defender and the attacker had RBF kernels, expressed as $\exp\{- 0.005 \|\queryin - \queryin'\|^{2}_{2} \}$ for $\queryin, \queryin' \in \Spacein$.
We compared our method with rounding a 16-decimal-place number to an integer in terms of disagreement and regression shown as \cref{tab:defense-methods}.
Here, $\g$ (Rounding) denotes the surrogate model using rounding of the output of the true model, and $\h$ (Rounding) denotes the attacker's model using $\g$ (Rounding) as supervised data.
We evaluated the mean squared error (MSE) of the defender's and attacker's models in test data to examine the effect of the difference between the distributions $\Qdist$ and $\Pdist$, since \cref{thm:trivial-case} and \cref{thm:infty-case} suggest the importance of the difference of two supports.

The results are in \cref{kernel-mse}.
We performed 50 experiments on each $\distmu$ by shuffling the data to preserve the size of each sample and aggregating the results to calculate the median.
The large value of $\h_{\tilde{\hparam}(\gparam_{\mathrm{opt}})}$ indicates that our defense method performed well especially in the case of a large $\distmu$ when the distributional gap between the attacker and defender was large.
Whereas undefended deployment, expressed as $\h_{\tilde{\hparam}(\gparam^{*})}$, was susceptible to the attacker's imitation even for large $\distmu$, the surrogate model made an adequate defense for large $\distmu$.
We also found that the prior defense by rounding had almost the same effect as undefended models.
We also observed that the MSE of $\g_{\gparam_{\mathrm{opt}}}$ stayed near to the MSE of the true model $\f_{\gparam^{*}}$, which implies that the constraint of BODAME--KRR/KR performed well.

\begin{figure}[ht]
    \centering
    \includegraphics[width=10cm]{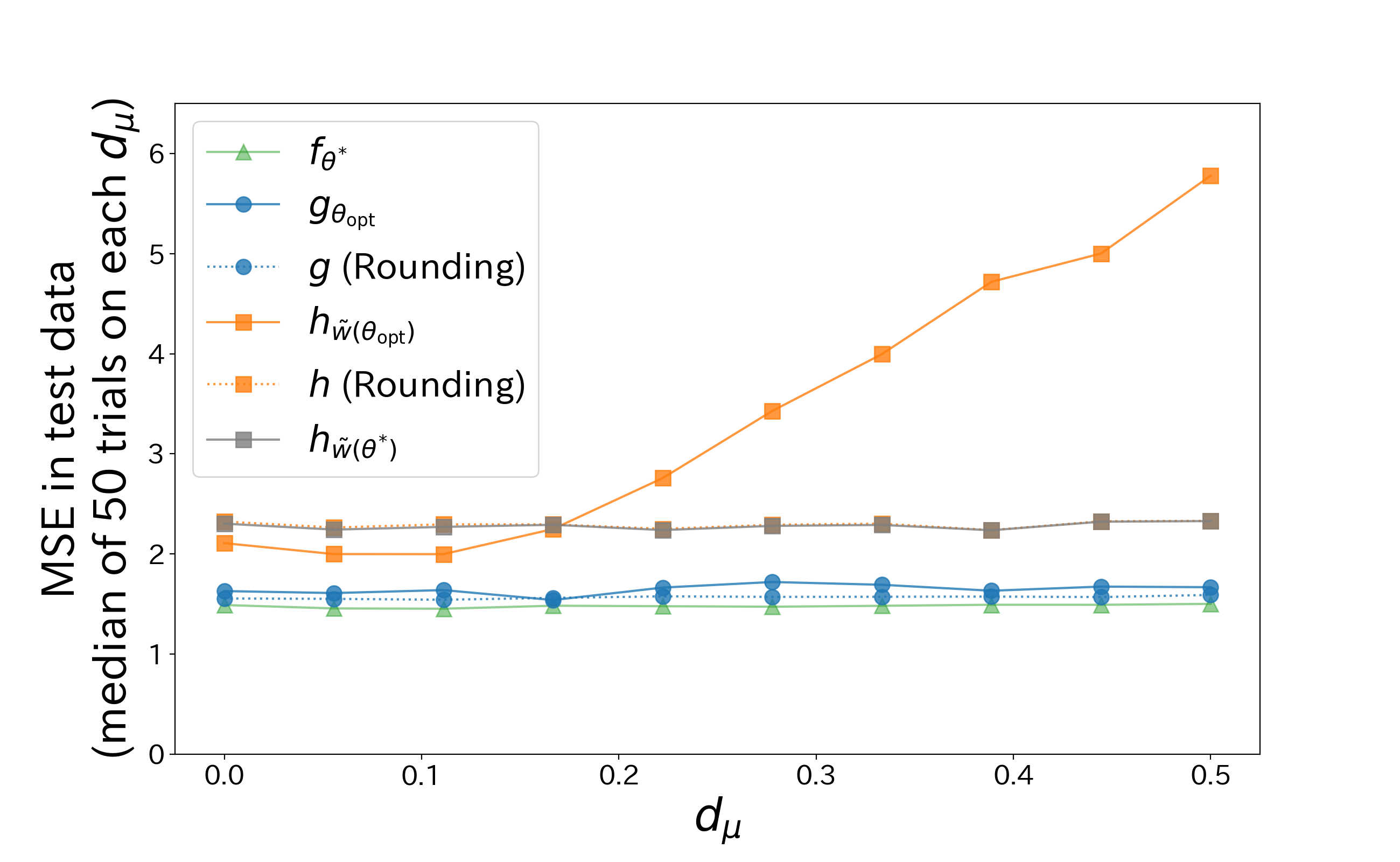}
    \caption{Relation between each model's MSE and the distance of the attacker's data from the defender's data in the UCI wine dataset. Dotted lines are for models in relation to the rounding defense.}
    \label{kernel-mse}
\end{figure}

\subsection{Generalization ability of surrogate model against noisy attacker's queries}
\label{subsec:generalization-kernel-experiments}
We tested the generalization ability of our surrogate model against attacker's queries in the BODAME--KRR/KR.
We optimized BODAME--KRR/KR and obtained a surrogate model $\g_{\gparam_{\mathrm{opt}}}$ with an MSE of 1.477 for the true model $\ftrue$ with an MSE of 1.377 on 500 test samples when the attacker's distribution was shifted by $\distmu = 0.5$.
Next, we trained the attacker's model using the same hyperparameters, but used new 300 queries that were generated by adding new noise that followed the same normal distribution for which $\distmu$ was 0.5 to attacker's data, by using supervised data by the fixed surrogate model $\g_{\gparam_{\mathrm{opt}}}$.
We performed 50 experiments in which we changed the normally distributed noise and computed the mean and the standard deviation of MSEs.

The results in  \cref{fig:transferability-data-kernel} imply that our surrogate models have generalization ability against the attacker's queries, since the attacker's models that were trained on new queries had almost the same MSEs as the attacker's model trained with the original queries.
This experiment suggests that the defender can prevent an attacker whose queries follow the same distribution used in BODAME--KRR/KR from extracting the defender's model solving BODAME--KRR/KR only once when the defender knows the attacker's kernel model as \cref{kernel-attacker-assumption} except for the attacker's samples.
The generalization ability of the surrogate model against attacker's queries is significantly different from rounding because it requires all queries on the attacker's original data due to the online property of each attack.

\begin{figure}[ht]
    \centering
    \includegraphics[width=10cm]{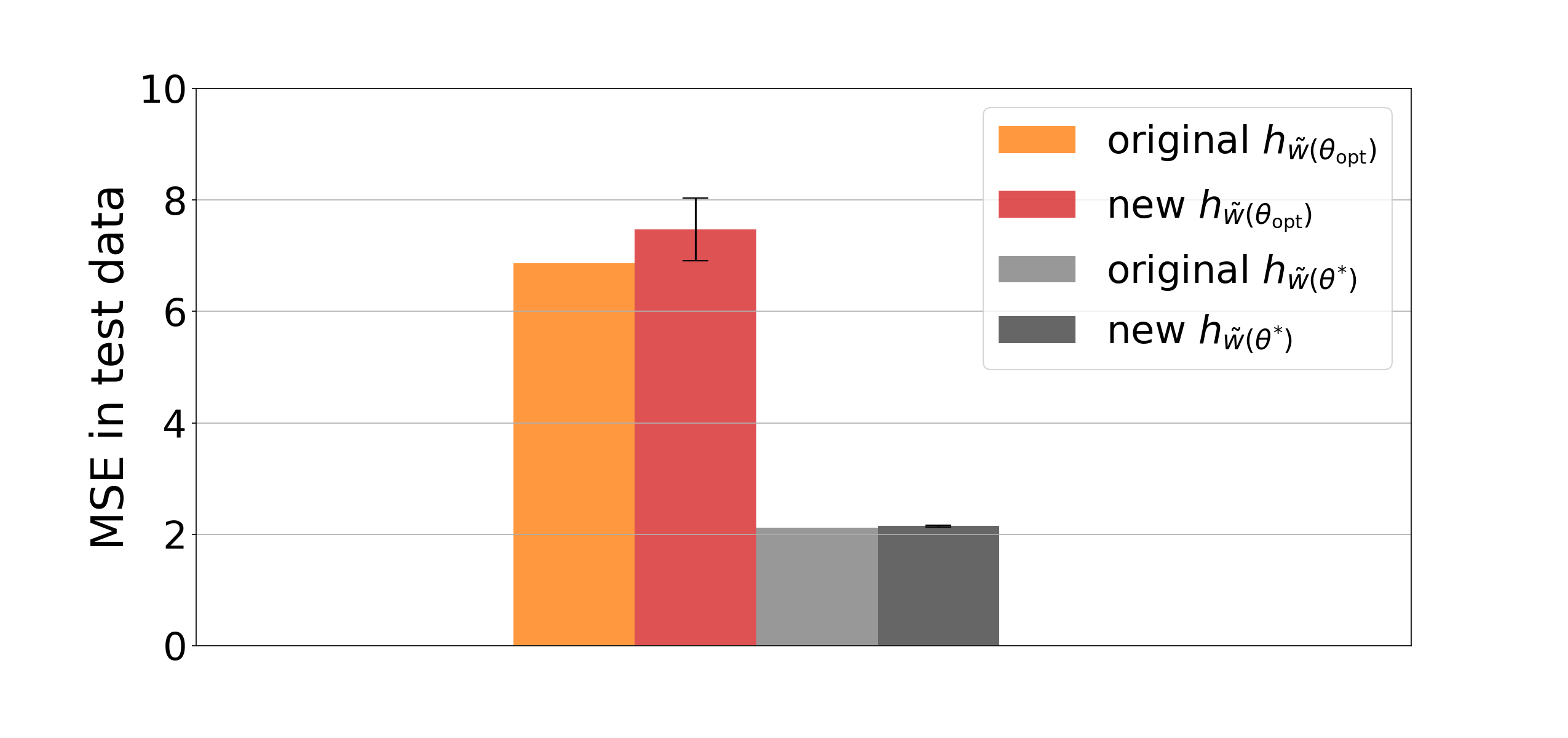}
    \caption{Results of experiments testing the generalization ability of surrogate models against attacker's queries in BODAME--KRR/KR. New models mean the attacker's models learned by using new attacker's queries. The error bar indicates the standard deviation.}
    \label{fig:transferability-data-kernel}
\end{figure}

\section{Experiments on BODAME--SGD/SGA}
\label{sec:exp-bodame-sgd-sga}
We performed similar experiments on BODAME--SGD/SGA to those on BODAME--KRR/KR, but using a different dataset.
Here, we used the MNIST dataset \citep{lecun1998gradient}, which is a dataset of hand--written digits from 0 to 9 in 28 $\times$ 28 pixels.
The training sample size was 60000 and the test sample size was 10000.
The experiments were conducted on one GPU of an NVIDIA Tesla T4.
We use the same notation as in \cref{sec:exp-bodame-krr-kr}.

\subsection{Effect of difference between distributions}
\label{subsec:sgd-exp-dif-dist}

We fixed defender's labels $\{0, 1, 2\}$ and changed the attacker's labels into $\{0, 1, 2\}, \{1, 2, 3\}, \{7, 8, 9\}$ to make the two distributions different.
\cref{fig:2d-vis-mnist} is a 2D visualization of $\Qdist$ and $\Pdist$ made by t--SNE \citep{maaten2008visualizing}.
It shows that the restriction of labels makes a difference between $\Qdist$ and $\Pdist$.
We used the same CNN consisting of two convolutional layers, a maxpooling layer, and two fully--connected layers as the defender's model and attacker's model.
We trained the defender's true model with 29399 samples which had full labels in 30 epochs using mini--batch SGD without momentum to make $\f_{\gparam^{*}}$.
We also pre--trained the attacker's model with 12600 samples, which had full labels, in 5 epochs by using mini--batch SGD without momentum.
To make the surrogate model $\g_{\gparam_{\mathrm{opt}}}$, we used transfer learning in \cref{sgd-bodame-algo} from the initial point $\gparam^{*}$.
Only the parameters in the final layers are learned in the attacker's model and surrogate model.
A squared loss function was commonly used for $\lobj, \lconst$, and $\lattacker$. The same $\{\xconstsample\}_{\idxconst=1}^{\nconst}$, where $\nconst = 1957$, was used in all experiments .
We set one epoch of the attacker and 15 steps for a mini-batch with a size of 64, and set one epoch of the defender and 15 steps for a mini-batch with a size of 64.
The parameters were $\lrouter_{\stepouter} = 0.3$ (for all $\stepouter = 1, \dots, 15$), $\lrinner_{\stepinner} = 0.01$ (for all $\stepinner = 1, \dots, 15$), $\regconst = 0.1$ and $\epsquality = 1.0$.

We evaluated the models using 3147 test samples that were not used for pre--training or transfer learning. \cref{fig:sgd-acc-dif-dist} shows that our model defends well when the defender's distribution is different from the attacker's distribution, since the difference in accuracy between $\hundefended$ and $\hdefended$ was about 20 \% in the setting in which the attacker's labels were $\{7, 8, 9\}$.
These results also suggest that the difference of distributions is an important factor in a defense against model extraction in BODAME--SGD/SGA.
We also observed that the accuracy of $\g_{\gparam_{\mathrm{opt}}}$ stayed near to the accuracy of the true model $\f_{\gparam^{*}}$, which implies that the constraint of BODAME--SGD/SGA performed well.

\begin{figure}[ht]
    \centering
    \includegraphics[width=13cm]{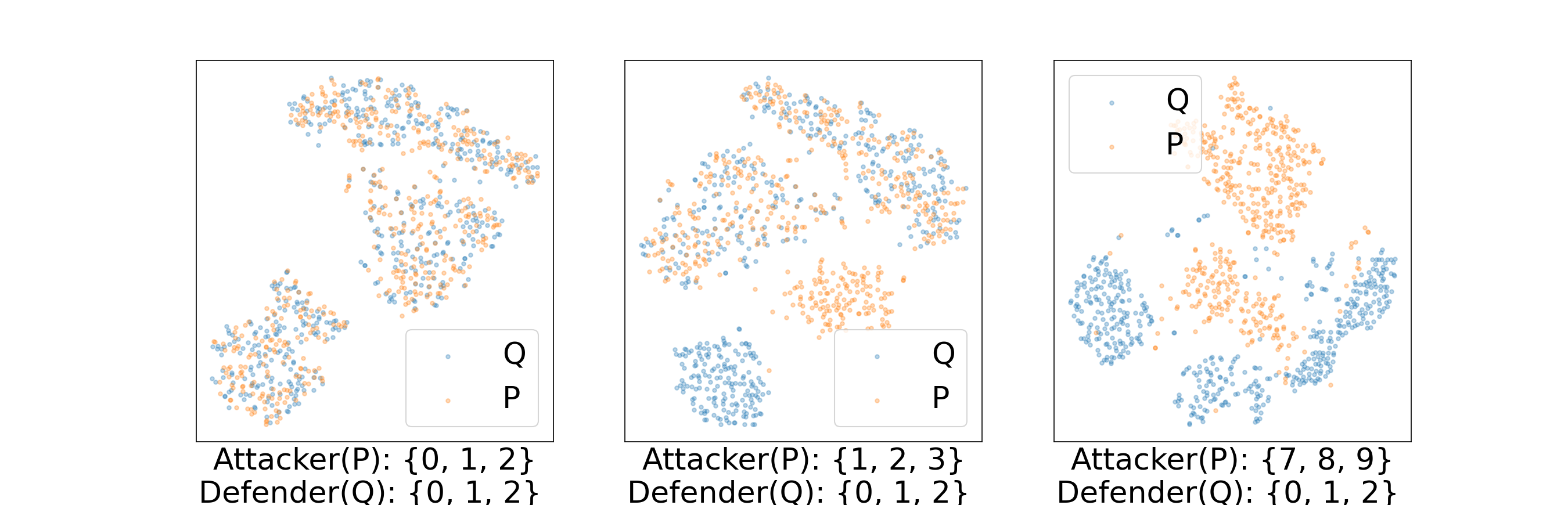}
    \caption{2D visualization of attacker's data and defender's data made by t--SNE conditioned on partial labels. We selected 500 samples from each $\Qdist$ and $\Pdist$ in each setting of partial labels.}
    \label{fig:2d-vis-mnist}
    \centering
    \includegraphics[width=10cm]{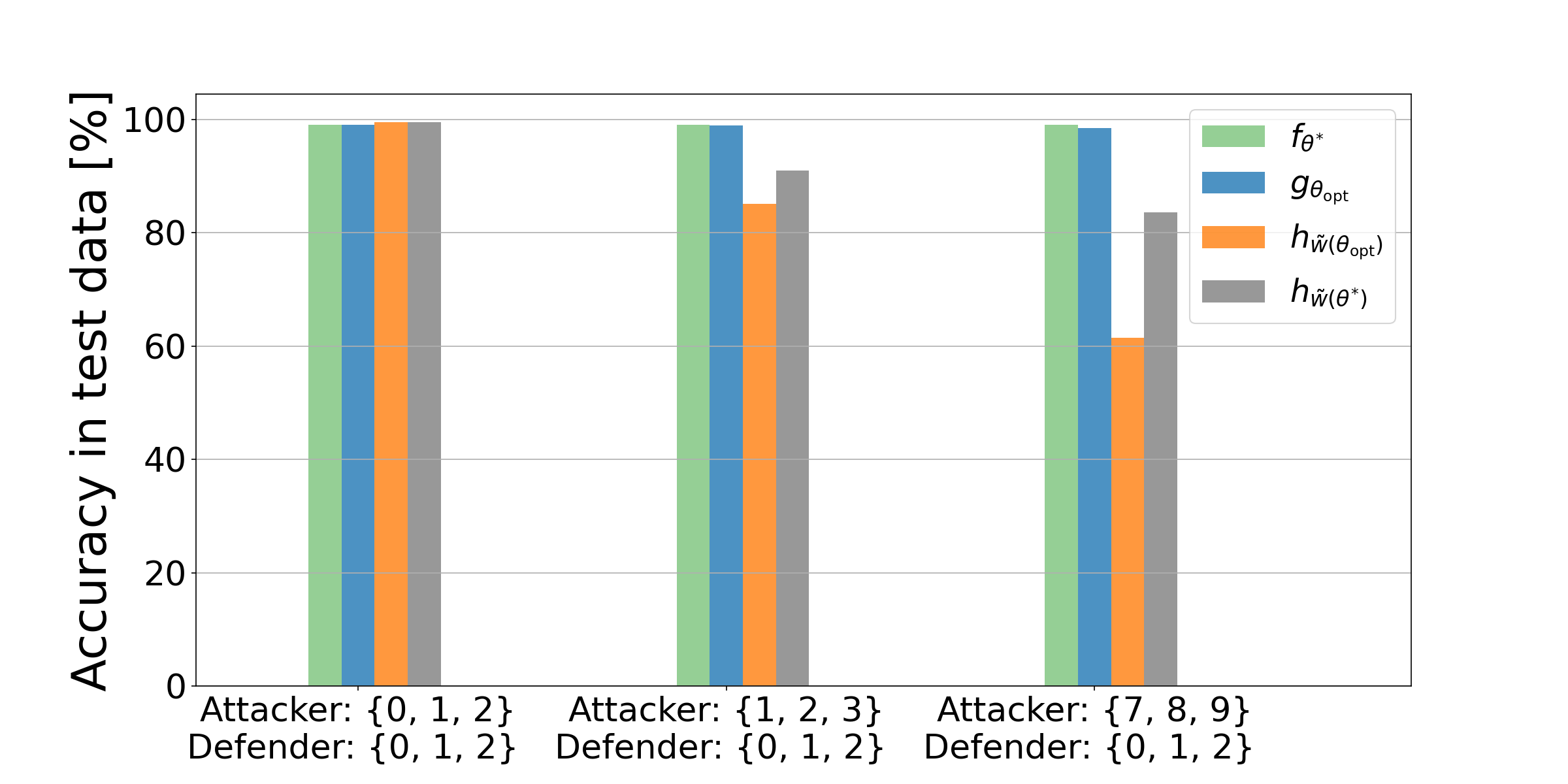}
    \caption{Relation between each model's accuracy and the difference of the attacker's data from defender's data in the MNIST dataset.}
    \label{fig:sgd-acc-dif-dist}
\end{figure}

\subsection{Generalization ability of surrogate model against noisy attacker's queries}
We tested the generalization ability of a surrogate model against attacker's queries as \cref{subsec:generalization-kernel-experiments} when $\Qdist$ and $\Pdist$ were respectively conditioned on labels $\{0, 1, 2\}$ and $\{7, 8, 9\}$ so that the difference between the distributions would be large.
We obtained a surrogate model $\g_{\gparam_{\mathrm{opt}}}$ with an accuracy of 98.47 for the true model $\ftrue$ with an accuracy of 99.05 by solving BODAME--SGD/SGA once.
Next, we trained the attacker's model by using new 768 queries with the same attacker's hyperparameters.
We performed ten experiments in which we randomly chose mini--batch data and computed the mean and the standard deviation of accuracy.

\begin{figure}[ht]
    \centering
    \includegraphics[width=10cm]{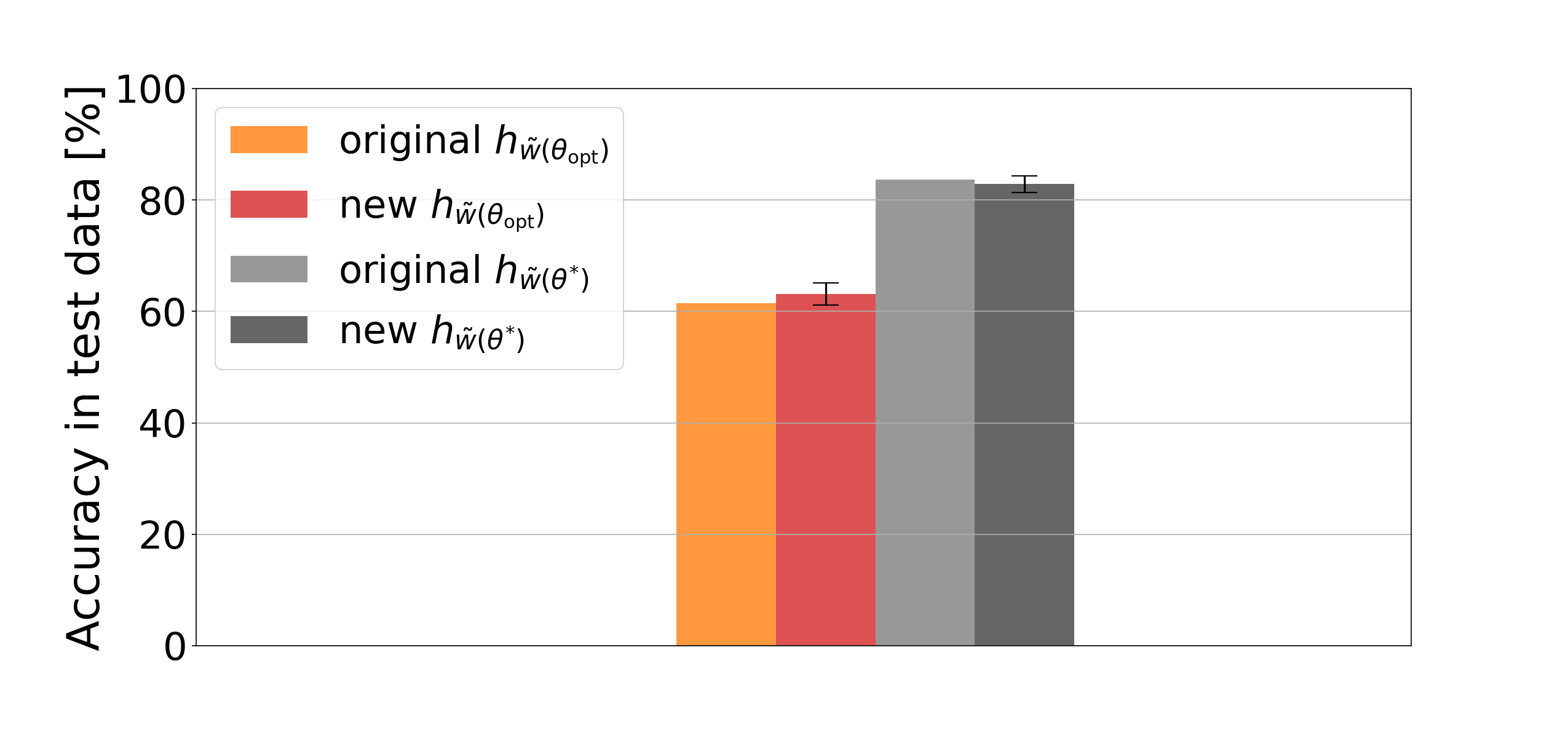}
    \caption{Results of experiments to confirm generalization ability of surrogate models on attacker's queries. New models mean the attacker models learned by using new attacker's queries. The error bar indicates the standard deviation.}
    \label{fig:transferability_data_sgd}
\end{figure}

The results are in \cref{fig:transferability_data_sgd}.
The surrogate model showed generalization ability against attacker's queries since the attacker's models that were trained on the new queries had almost the same accuracy as the original model.
This experiment also implies that the defender can prevent an attacker who has queries following the same distribution from extracting the defender's model by solving BODAME--SGD/SGA only once, which it is significantly different from the previous online defenses.

\section{Conclusion}
\label{sec:conclusion}
We formulated and theoretically analyzed a defense against model extraction framed as a bilevel optimization problem.
In addition, under the assumption of kernel models or SGD attacker, we showed respectively the optimization problem and the algorithms to find the solutions.
In numerical experiments, we showed that the proposed defense better mitigates the attacker's model extraction attack compared with the case without defense and with a prior defense.
We also confirmed the generalization ability of the defender's surrogate model against attacker's queries in numerical experiments.
The results suggests that the proposed defense has an advantage with the prior online defense against individual queries in that the same model can be used for the defense once our optimization problem is solved.
\section*{Acknowledgments}
AN was partially supported by JSPS Kakenhi (19K20337) and JST-PRESTO.

\bibliographystyle{apalike}

\begin{thebibliography}{}

\bibitem[Adachi et~al., 2017]{adachi2017solving}
Adachi, S., Iwata, S., Nakatsukasa, Y., and Takeda, A. (2017).
\newblock Solving the trust-region subproblem by a generalized eigenvalue
  problem.
\newblock {\em SIAM Journal on Optimization}, 27(1):269--291.

\bibitem[Alabdulmohsin et~al., 2014]{alabdulmohsin2014adding}
Alabdulmohsin, I.~M., Gao, X., and Zhang, X. (2014).
\newblock Adding robustness to support vector machines against adversarial
  reverse engineering.
\newblock In {\em 23rd ACM International Conference on Conference on
  Information and Knowledge Management}, pages 231--240.

\bibitem[Bastani et~al., 2017]{bastani2017interpretability}
Bastani, O., Kim, C., and Bastani, H. (2017).
\newblock Interpretability via model extraction.
\newblock In {\em Workshop on Fairness, Accountability, and Transparency}.

\bibitem[Batina et~al., 2019]{batina2019csi}
Batina, L., Bhasin, S., Jap, D., and Picek, S. (2019).
\newblock {CSI} {NN}: Reverse engineering of neural network architectures
  through electromagnetic side channel.
\newblock In {\em 28th {USENIX} Security Symposium}, pages 515--532.

\bibitem[Chandrasekaran et~al., 2020]{chandrasekaran2020exploring}
Chandrasekaran, V., Chaudhuri, K., Giacomelli, I., Jha, S., and Yan, S. (2020).
\newblock Exploring connections between active learning and model extraction.
\newblock In {\em 29th {USENIX} Security Symposium}, pages 1309--1326.

\bibitem[Dua and Graff, 2017]{Dua:2019}
Dua, D. and Graff, C. (2017).
\newblock {UCI} machine learning repository.

\bibitem[Duchi et~al., 2011]{duchi2011adaptive}
Duchi, J., Hazan, E., and Singer, Y. (2011).
\newblock Adaptive subgradient methods for online learning and stochastic
  optimization.
\newblock {\em Journal of Machine Learning Research}, 12(7).

\bibitem[Forsgren et~al., 2002]{forsgren2002interior}
Forsgren, A., Gill, P.~E., and Wright, M.~H. (2002).
\newblock Interior methods for nonlinear optimization.
\newblock {\em SIAM review}, 44(4):525--597.

\bibitem[Franceschi et~al., 2017]{franceschi2017forward}
Franceschi, L., Donini, M., Frasconi, P., and Pontil, M. (2017).
\newblock Forward and reverse gradient-based hyperparameter optimization.
\newblock In {\em 34th International Conference on Machine Learning}, pages
  1165--1173.

\bibitem[Franceschi et~al., 2018]{franceschi2018bilevel}
Franceschi, L., Frasconi, P., Salzo, S., Grazzi, R., and Pontil, M. (2018).
\newblock Bilevel programming for hyperparameter optimization and
  meta-learning.
\newblock In {\em 35th International Conference on Machine Learning}, pages
  1568--1577.

\bibitem[Fredrikson et~al., 2015]{fredrikson2015model}
Fredrikson, M., Jha, S., and Ristenpart, T. (2015).
\newblock Model inversion attacks that exploit confidence information and basic
  countermeasures.
\newblock In {\em 22nd ACM SIGSAC Conference on Computer and Communications
  Security}, pages 1322--1333.

\bibitem[Goodfellow et~al., 2015]{goodfellow2015explaining}
Goodfellow, I., Shlens, J., and Szegedy, C. (2015).
\newblock Explaining and harnessing adversarial examples.
\newblock In {\em 3rd International Conference on Learning Representations}.

\bibitem[Horst and Tuy, 1996]{horst1996global}
Horst, R. and Tuy, H. (1996).
\newblock {\em Global optimization: Deterministic approaches}.
\newblock Springer Science \& Business Media, third edition.

\bibitem[Jagielski et~al., 2020]{jagielski2020high_accuracy}
Jagielski, M., Carlini, N., Berthelot, D., Kurakin, A., and Papernot, N.
  (2020).
\newblock High accuracy and high fidelity extraction of neural networks.
\newblock In {\em 29th {USENIX} Security Symposium}, pages 1345--1362.

\bibitem[Juuti et~al., 2019]{juuti2019prada}
Juuti, M., Szyller, S., Marchal, S., and Asokan, N. (2019).
\newblock {PRADA}: protecting against {DNN} model stealing attacks.
\newblock In {\em 4th IEEE European Symposium on Security and Privacy}, pages
  512--527.

\bibitem[Kesarwani et~al., 2018]{kesarwani2018model}
Kesarwani, M., Mukhoty, B., Arya, V., and Mehta, S. (2018).
\newblock Model extraction warning in mlaas paradigm.
\newblock In {\em 34th Annual Computer Security Applications Conference}, pages
  371--380.

\bibitem[Kingma and Ba, 2015]{KingmaB2014adam}
Kingma, D.~P. and Ba, J. (2015).
\newblock Adam: {A} method for stochastic optimization.
\newblock In {\em 3rd International Conference on Learning Representations}.

\bibitem[Krishna et~al., 2020]{krishna2020thieves}
Krishna, K., Tomar, G.~S., Parikh, A., Papernot, N., and Iyyer, M. (2020).
\newblock Thieves of sesame street: Model extraction on {BERT}-based {API}s.
\newblock In {\em 8th International Conference on Learning Representations}.

\bibitem[Lecun et~al., 1998]{lecun1998gradient}
Lecun, Y., Bottou, L., Bengio, Y., and Haffner, P. (1998).
\newblock Gradient-based learning applied to document recognition.
\newblock In {\em IEEE}, pages 2278--2324.

\bibitem[Lee et~al., 2019]{lee2019defending}
Lee, T., Edwards, B., Molloy, I., and Su, D. (2019).
\newblock Defending against neural network model stealing attacks using
  deceptive perturbations.
\newblock In {\em IEEE Security and Privacy Workshops}, pages 43--49.

\bibitem[Lehoucq et~al., 1998]{lehoucq1998arpack}
Lehoucq, R.~B., Sorensen, D.~C., and Yang, C. (1998).
\newblock {\em ARPACK users' guide: solution of large-scale eigenvalue problems
  with implicitly restarted Arnoldi methods}.
\newblock SIAM.

\bibitem[Lowd and Meek, 2005]{lowd2005adversarial}
Lowd, D. and Meek, C. (2005).
\newblock Adversarial learning.
\newblock In {\em 11th ACM SIGKDD International Conference on Knowledge
  Discovery in Data Mining}, pages 641--647.

\bibitem[Maaten and Hinton, 2008]{maaten2008visualizing}
Maaten, L. v.~d. and Hinton, G. (2008).
\newblock Visualizing data using t-{SNE}.
\newblock {\em Journal of Machine Learning Research}, 9(Nov):2579--2605.

\bibitem[Maclaurin et~al., 2015]{maclaurin2015gradient}
Maclaurin, D., Duvenaud, D., and Adams, R. (2015).
\newblock Gradient-based hyperparameter optimization through reversible
  learning.
\newblock In {\em 32nd International Conference on Machine Learning}, pages
  2113--2122.

\bibitem[Milli et~al., 2019]{milli2019model}
Milli, S., Schmidt, L., Dragan, A.~D., and Hardt, M. (2019).
\newblock Model reconstruction from model explanations.
\newblock In {\em 3rd Conference on Fairness, Accountability, and
  Transparency}, pages 1--9.

\bibitem[Oh et~al., 2018]{oh2018towards}
Oh, S.~J., Augustin, M., Fritz, M., and Schiele, B. (2018).
\newblock Towards reverse-engineering black-box neural networks.
\newblock In {\em 6th International Conference on Learning Representations}.

\bibitem[Orekondy et~al., 2019]{orekondy2019knockoff}
Orekondy, T., Schiele, B., and Fritz, M. (2019).
\newblock Knockoff nets: Stealing functionality of black-box models.
\newblock In {\em 32nd IEEE Conference on Computer Vision and Pattern
  Recognition}, pages 4954--4963.

\bibitem[Orekondy et~al., 2020]{orekondy2020prediction}
Orekondy, T., Schiele, B., and Fritz, M. (2020).
\newblock Prediction poisoning: Towards defenses against {DNN} model stealing
  attacks.
\newblock In {\em 8th International Conference on Learning Representations}.

\bibitem[Pal et~al., 2020]{pal2020activethief}
Pal, S., Gupta, Y., Shukla, A., Kanade, A., Shevade, S., and Ganapathy, V.
  (2020).
\newblock Activethief: Model extraction using active learning and unannotated
  public data.
\newblock In {\em 34th AAAI Conference on Artificial Intelligence}, pages
  865--872.

\bibitem[Robbins and Monro, 1951]{robbins1951stochastic}
Robbins, H. and Monro, S. (1951).
\newblock A stochastic approximation method.
\newblock {\em The annals of mathematical statistics}, pages 400--407.

\bibitem[Rolnick and Kording, 2020]{rolnick2020reverse}
Rolnick, D. and Kording, K. (2020).
\newblock Reverse-engineering deep relu networks.
\newblock In {\em 37th International Conference on Machine Learning}, pages
  8178--8187.

\bibitem[Sch{\"o}lkopf et~al., 2002]{scholkopf2002learning}
Sch{\"o}lkopf, B., Smola, A.~J., Bach, F., et~al. (2002).
\newblock {\em Learning with kernels: support vector machines, regularization,
  optimization, and beyond}.
\newblock MIT press.

\bibitem[Szegedy et~al., 2014]{szegedy2014intriguing}
Szegedy, C., Zaremba, W., Sutskever, I., Bruna, J., Erhan, D., Goodfellow, I.,
  and Fergus, R. (2014).
\newblock Intriguing properties of neural networks.
\newblock In {\em 2nd International Conference on Learning Representations}.

\bibitem[Tramèr et~al., 2016]{tramer2016stealing}
Tramèr, F., Zhang, F., Juels, A., Reiter, M.~K., and Ristenpart, T. (2016).
\newblock Stealing machine learning models via prediction {API}s.
\newblock In {\em 25th {USENIX} Security Symposium}, pages 601--618.

\bibitem[Zheng et~al., 2019]{zheng2019bdpl}
Zheng, H., Ye, Q., Hu, H., Fang, C., and Shi, J. (2019).
\newblock {BDPL}: A boundary differentially private layer against machine
  learning model extraction attacks.
\newblock In {\em 24th European Symposium on Research in Computer Security},
  pages 66--83.

\end{thebibliography}

\clearpage

\setcounter{section}{0}
\setcounter{subsection}{0}

\part*{\Large{Appendix}}

\section{Notation}
\noindent
\textbf{Models.} A defender has a true model as $\f \in \fSpace: \Spacein \to \Spaceout$, where $\fSpace$ is a function space, $\Spacein$ is an input space and $\Spaceout$ is an output space.
A defender builds a parameterized surrogate model $\g_{\gparam} \in \gSpace: \Spacein \to \Spaceout$, where $\gSpace$ is a function space, $\gparam \in \gParam$, and $\gParam$ is a parameter space.
If $\f$ is parameterized by $\gparam^{*} \in \gParam$, it is denoted by $\f_{\gparam^{*}}$.
An attacker builds a parameterized model $\h_{\hparam} \in \hSpace$ by model extraction where $\hSpace$ is a function space, $\hparam \in \hParam$, and $\hParam$ is a parameter space.
$\g_{\gparam}$ is simply written by $\g$ and $\h_{\hparam}$ is also simply written by $\h$ when we discuss functionally properties.
If $\gParam$ is a Euclidean space, $\gParam$ indicates $\mathbb{R}^{\dgParam}$, where $\dgParam \in \mathbb{N}$.
If $\hParam$ is a Euclidean space, $\hParam$ indicates $\mathbb{R}^{\dhParam}$, where $\dhParam \in \mathbb{N}$.
When we assumue that a true model uses a positive definite kernel, the kernel is denoted by $\ktrue$.
When we assume that an attacker's model uses a positive definite kernel, the kernel is denoted by $\kattacker$. 

\noindent
\textbf{Distribution and samples.}
A defender has input distribution  $\Qdist$ on $\Spacein$. An attacker has input distribution $\Pdist$ on $\Spacein$. In empirical settings, A defender has objective samples $\{\xobjsample\}_{\idxobj=1}^{\nobj}$ ($\xobjsample \sim \Qdist$ i.i.d., $\nobj \in \mathbb{N}$) and constraint samples $\{\xconstsample\}_{\idxconst=1}^{\nconst}$ ($\xconstsample \sim \Qdist$ i.i.d., $\nconst \in \mathbb{N}$). In empirical settings, an attacker has samples $\{\xattackersample\}_{\idxattacker=1}^{\nattacker}$ ($\xattackersample \sim \Pdist$, i.i.d., $\nattacker \in \mathbb{N})$. Training samples to learn a true model $\f$ are denoted by $\{\xtrainsample\}_{\idxtrain=1}^{\ntrain}$ ($\xtrainsample \sim \Qdist$, i.i.d., $\ntrain \in \mathbb{N}$).

\noindent
\textbf{Loss functions.}
A defender has an objective loss function $\lobj : \Spaceout \times \Spaceout \to \mathbb{R}$ and constraint loss function $\lconst : \Spaceout \times \Spaceout \to \mathbb{R}$. An attacker has a loss function $\lattacker: \Spaceout \times \Spaceout \to \mathbb{R}$.

\noindent
\textbf{Surrogate quality.} $\epsquality > 0$ is a parameter to control quality of a surrogate model $\g$ to approximate $\f$.

\noindent
\textbf{Hyperparameters.} $\lambdaattacker > 0$ is a coefficient of regularization in $\lattacker$.
$\lrinner$ is a learning rate of an attacker that uses SGD.
$\lrouter$ is a learning rate used in defender's optimization for defense. $\regconst$ is a barrier parameter used in \cref{sgd-bodame-algo}.

\noindent
\textbf{Symbols.}
Support of distribution is denoted by $\supp(\cdot)$. Indicator function is denoted by $\ind$. Kernel of linear map $A$ is denoted by $\Ker(A)$. Sign function is denoted by $\mathrm{sgn}: \mathbb{R} \to \{-1, 1\}$, 
\begin{align}
    \mathrm{sgn}(x) = \begin{cases} 1 & (x > 0)\\ -1 & (x \le 0)\end{cases} .
\end{align}

\section{Proofs}

\subsection{Proofs of Theorem 1 and Theorem 2}
We will denote the expected BODAME as a function approximation as follows:
\begin{align}
    \max_{\g \in \gSpace} \quad & \E_{\rvin \sim \Qdist} [\lobj(\f(\rvin), \h_{\g}(\rvin))], \label{approx-expected-objective} \\
    \mathrm{s.t.} \quad & \h_{\g} = \argmin_{\h \in \hSpace} \E_{\rvin \sim \Pdist} [\lattacker(\g(\rvin), \h(\rvin))], \label{approx-expected-inner-minimization}  \\
    \quad &  \E_{\rvin \sim \Qdist} [\lconst(\f(\rvin), \g(\rvin)) ]  \le \epsquality. \label{approx-expected-constraint-quality}
\end{align}

We give a proof of \cref{thm:trivial-case}.
\begin{proof}[Proof of  \cref{thm:trivial-case}]
Minimum of $\E_{\rvin \sim \Pdist} [\lattacker(\g(\rvin), \h(\rvin))]$ is obviously $0$ since $\lattacker \ge 0$, $\gSpace = \hSpace$ and we can set $\h(\queryin) = \g(\queryin)$ for all $\queryin \in \Spacein$. Letting $\ind$ be an indicator function, we have
\begin{align}
    \E_{\rvin \sim \Pdist} [\lattacker(\g(\rvin), \h(\rvin))] &= \E_{\rvin \sim \Pdist} [\ind_{\{\g(\rvin) \ne \h(\rvin) \}}\lattacker(\g(\rvin), \h(\rvin))] + \E_{\rvin \sim \Pdist} [\ind_{\{\g(\rvin) = \h(\rvin) \}}\lattacker(\g(\rvin), \h(\rvin))] \\
    &= \E_{\rvin \sim \Pdist} [\ind_{\{\g(\rvin) \ne \h(\rvin) \}}\lattacker(\g(\rvin), \h(\rvin))].
\end{align}
Since $\lattacker(\queryout, \queryout') > 0$ if $\queryout \ne \queryout'$ and minimum of $\E_{\rvin \sim \Pdist} [\lattacker(\g(\rvin), \h(\rvin))]$ is 0, it is necessary that all $\h_{\g}$ satisfies $\h_{\g}(\rvin) = \g(\rvin)$ a.s. ($\rvin \sim \Pdist$). This condition is equivalent to $\h_{\g}(\rvin) = \g(\rvin)$ a.s. ($\rvin \sim \Qdist$) from the assumption $\supp(\Qdist) = \supp(\Pdist)$. Therefore, $\E_{\rvin \sim \Qdist} [\lobj(\f(\rvin), \h_{\g}(\rvin))] = \E_{\rvin \sim \Qdist} [\lconst(\f(\rvin), \g(\rvin)) ]$. As a result, $\E_{\rvin \sim \Qdist} [\lobj(\f(\rvin), \h_{\g}(\rvin))]$ can increase to $\epsquality$ from the constraint $\E_{\rvin \sim \Qdist} [\lconst(\f(\rvin), \g(\rvin)) ]  \le \epsquality$.
\end{proof}

We also give a proof of \cref{thm:infty-case}.
\begin{proof}[Proof of  \cref{thm:infty-case}]
For $\C > 0$, suppose that
\begin{align}
    \g(\queryin) = \begin{cases}
        \f(x) + \C & (\queryin \in \supp(\Pdist))\\
        \f(x) & (\mathrm{otherwise}).
    \end{cases}
\end{align}
This $\g$ satisfies $\E_{\rvin \sim \Qdist} [\lconst(\f(\rvin), \g(\rvin))] = 0$. $\h_{\g}(\rvin) = \g(\rvin)$ a.s. ($\rvin \sim \Pdist$) holds from the assumption of $\lattacker$ as shown in the proof of \cref{thm:trivial-case}. It should be noted that when we take $\f^{*} \in \fSpace$ for $\f$ and $\h_{\g}^{*} \in \hSpace$ for $\h_{\g}$ which satisfy for $\queryin \in [b, b + \C / 2 \lipconstant]$,
\begin{align}
    \f^{*} (\queryin) &= \lipconstant \queryin - b \lipconstant + \f(b),\\
    \h_{\g}^{*} (\queryin) &= - \lipconstant \queryin + b \lipconstant + \f(b) + \C,
\end{align}
and $\f^{*} (\rvin) = \h_{\g}^{*}(\rvin)$ a.s. ($\rvin \sim \Qdist$) on $\supp(\Qdist) \setminus [b, b + \C / 2 \lipconstant]$, the following inequality holds:
\begin{align}
    \E_{\rvin \sim \Qdist} [\lobj(\f(\rvin), \h_{\g}(\rvin))] \ge \E_{\rvin \sim \Qdist} [\lobj(\f^{*}(\rvin), \h_{\g}^{*}(\rvin))] \label{ineq:thm2-obj}
\end{align}
from the assumption that $\fSpace$ and $\hSpace$ are $\lipconstant$-Lipschitz function spaces. For the right--hand side in (\ref{ineq:thm2-obj}), we have the following, 
\begin{align}
    \E_{\rvin \sim \Qdist} [\lobj(\f^{*}(\rvin), \h_{\g}^{*}(\rvin))]
    &= \int_{c}^{d} (\f^{*}(\queryin) -  \h_{\g}^{*}(\queryin))^{2} d\Qdist (\queryin) \\
    &= \int_{c}^{d} (\C + 2 b \lipconstant - 2 \lipconstant \queryin)^2 d\Qdist (\queryin)\\
    &\to \infty \quad (\C \to \infty),
\end{align}
and therefore, the claim of \cref{thm:infty-case} holds.
\end{proof}

\subsection{Proof of Theorem \ref{global_opt_kernel}}
We use the following Lemma (see e.g. Theorem 1.1, \citep{horst1996global}) mentioned in our paper:
\begin{lemma}[Concave function minimization on a nonempty compact convex set]
Let $\Set \subset \mathbb{R}^\ntrain$ be nonempty, compact and convex. Let $L: \Set \to \mathbb{R}$ be concave. Then the global minimum of $L$ on $\Set$ is attained at an extreme point of $\Set$. \label{concave-min}
\end{lemma}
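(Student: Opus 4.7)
The plan is to reduce the claim to the statement that the infimum of $L$ over $\Set$ equals the infimum over the extreme points of $\Set$, by writing each candidate minimizer as a convex combination of extreme points and invoking Jensen's inequality for concave functions. The ambient space $\mathbb{R}^\ntrain$ is finite dimensional, so the machinery reduces to Carath\'eodory's theorem rather than the full Krein--Milman theorem in its topological form.

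First I would verify that the global minimum exists. Since $\Set$ is compact and any real-valued concave function on a compact convex subset of $\mathbb{R}^\ntrain$ is continuous on the relative interior and can be shown to be bounded and to attain its infimum on $\Set$ (its infimum is attained on the boundary by a standard line-segment argument, and then recursion on the dimension of the smallest face containing it works), there exists some $x^{*} \in \Set$ with $L(x^{*}) = \min_{x \in \Set} L(x)$.

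Next I would apply Carath\'eodory's theorem: because $\Set$ is compact and convex in $\mathbb{R}^\ntrain$, it coincides with the convex hull of its set $\mathrm{ext}(\Set)$ of extreme points, so $x^{*} = \sum_{i=1}^{k} \lambda_i v_i$ for some $k \le \ntrain + 1$, nonnegative weights $\lambda_i$ summing to $1$, and extreme points $v_i \in \mathrm{ext}(\Set)$. Concavity of $L$ yields
\begin{align*}
L(x^{*}) \;=\; L\!\left(\sum_{i=1}^{k}\lambda_i v_i\right) \;\ge\; \sum_{i=1}^{k}\lambda_i L(v_i) \;\ge\; \min_{1 \le i \le k} L(v_i).
\end{align*}
Picking an index $i^{*}$ attaining that minimum gives $L(v_{i^{*}}) \le L(x^{*})$, while $v_{i^{*}} \in \Set$ forces the reverse inequality by definition of $x^{*}$. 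Hence $L(v_{i^{*}}) = L(x^{*})$ and $v_{i^{*}}$ is an extreme point at which the global minimum is attained.

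The main obstacle I anticipate is justifying the existence of the minimum, since concave functions can in principle be discontinuous on the boundary of $\Set$; however, this is handled either by assuming $L$ is continuous on $\Set$ (which is the setting used when the lemma is invoked, namely $L$ is a quadratic) or by the dimension-recursion argument above that pushes the infimum onto successively lower-dimensional faces until a vertex is reached. Once existence is granted, the Carath\'eodory plus Jensen step is entirely routine and gives the result immediately.
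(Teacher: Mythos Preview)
The paper does not actually prove this lemma; it simply states it and cites Theorem~1.1 of Horst and Tuy (1996) as a reference. So there is no proof in the paper to compare against, and your proposal goes beyond what the authors provide. Your argument---write a minimizer as a convex combination of extreme points via Carath\'eodory and then apply the concavity inequality to push the minimum onto one of those extreme points---is the standard textbook proof and is correct. You also correctly flag the one genuine subtlety, namely that a concave function need not be continuous on the boundary of $\Set$, so existence of a minimizer is not entirely automatic; your observation that in the paper's application $L$ is a quadratic (hence continuous, so the extreme-value theorem applies directly) is exactly the right way to dispose of this in context.
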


We give a proof of Theorem \ref{global_opt_kernel}.
\begin{proof}[Proof of Theorem \ref{global_opt_kernel}.]
Since $\A$ is positive semi--definite, $\hat{\A} \preceq O$ holds, and therefore, the objective of (\ref{acutual-problem}) is concave.
$\B$ is positive definite from the \cref{B-PDS-assumption}, $\{\hat{\gparam}  | \|\hat{\gparam}\|_{\B} \le \hat{\epsquality} \}$ is closed and bounded.
Furthermore, $\{\hat{\gparam} | \|\hat{\gparam}\|_{\B} \le \hat{\epsquality} \}$ contains a feasible solution $\gparam^{*} - \B^{-1} \vecb$, that implies that the set is nonempty.
Using \cref{concave-min}, we can guarantee that the problem (\ref{acutual-problem}) attains its minimum on $\{\hat{\gparam}  | \|\hat{\gparam}\|_{\B} = \hat{\epsquality} \}$. From Theorem 3.2 and Theorem 3.3 in \citep{adachi2017solving}, we obtain the global optimum of (\ref{acutual-problem}) as $\hat{\gparam}_{\mathrm{opt}} = - \mathrm{sgn} (\hat{\veca}^{\top} \eigenvec_2) \hat{\epsquality} \eigenvec_1 / \|\eigenvec_1\|_{\B}$ in $\{\hat{\gparam} | \|\hat{\gparam}\|_{\B} = \hat{\epsquality} \}$ where $\eigenvec_1, \eigenvec_2$ satisfies for the smallest eigenvalue $\eigenvalue$,
\begin{align}
            \begin{pmatrix} - \B & \hat{\A} \\ \hat{\A} & - \frac{\hat{\veca}\hat{\veca}^{\top}}{\hat{\epsquality}^2}\end{pmatrix} \begin{pmatrix} \eigenvec_1 \\ \eigenvec_2 \end{pmatrix} = \eigenvalue \begin{pmatrix} O & \B \\ \B & O \end{pmatrix} \begin{pmatrix} \eigenvec_1 \\ \eigenvec_2\end{pmatrix}
\end{align}
on the assumption $\hat{\veca} \not\perp \eigenspace$. By changing $\gparam$ to $\gparam - \B^{-1} \vecb$, we obtain the global optimum of the problem (\ref{kernel-objective}) and (\ref{kernel-constraint}).
\end{proof}

\section{Global optimization for BODAME--KRR/KR}
\label{appendix:kernel-bodame-hard}

In this section, we show an algorithm for the hard case of BODAME--KRR/KR that $\hat{\veca} \perp \eigenspace$ holds. We use the same notation in \cref{sec:kernel-bodame}.

\begin{theorem}[Global optimization for BODAME--KRR/KR]
    \cref{kernel_bodame_algo_hard} for the case of $\hat{\veca} \perp \eigenspace$ and \cref{kernel_bodame_algo} for the other cases output a a global optimum solution of BODAME--KRR/KR (\ref{kernel-objective}) and (\ref{kernel-constraint}). \label{thm:global-opt-hard}
\end{theorem}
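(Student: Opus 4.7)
The plan is to reduce the theorem to an already established case plus an appeal to the general trust-region subproblem algorithm of \citep{adachi2017solving}. First I would note that the argument at the start of the proof of \cref{global_opt_kernel} goes through verbatim regardless of whether $\hat{\veca} \perp \eigenspace$: the change of variable $\hat{\gparam} = \gparam - \B^{-1}\vecb$ under \cref{B-PDS-assumption} converts (\ref{kernel-objective})--(\ref{kernel-constraint}) into (\ref{acutual-problem}), whose objective is concave because $\hat{\A} = -2\A \preceq O$, and whose feasible set $\{\hat{\gparam}:\|\hat{\gparam}\|_\B \le \hat{\epsquality}\}$ is nonempty (it contains $\gparam^{*}-\B^{-1}\vecb$), compact, and convex. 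Therefore \cref{concave-min} applies and a global minimum is attained at an extreme point of the ellipsoid, i.e.\ on the boundary $\|\hat{\gparam}\|_\B = \hat{\epsquality}$.

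With this reduction, what remains is a boundary-constrained trust-region subproblem, for which the global minimum can be characterized through the generalized eigenvalue problem (\ref{kernel_eigen_prob}). In the easy case $\hat{\veca} \not\perp \eigenspace$, \cref{global_opt_kernel} already tells us that \cref{kernel_bodame_algo} returns the global optimum. In the complementary hard case $\hat{\veca} \perp \eigenspace$, the normalized eigenvector $\eigenvec_1/\|\eigenvec_1\|_\B$ corresponding to the rightmost generalized eigenvalue need not yield the KKT point directly, because $\hat{\veca}^\top \eigenvec_2$ may vanish and the recovery formula in line 3 of \cref{kernel_bodame_algo} breaks down. Here I would invoke Algorithm 2 of \citep{adachi2017solving}, which handles exactly this degeneracy by computing the smallest eigenvalue together with an eigenvector component lying in $\eigenspace$ and then correcting it so that the resulting $\hat{\gparam}$ both lies on the boundary and satisfies the stationarity condition with the same Lagrange multiplier $\eigenvalue$. \cref{kernel_bodame_algo_hard} would implement this hard-case construction, and its output is a KKT point of (\ref{acutual-problem}) by the corresponding theorem in \citep{adachi2017solving}.

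To finish, I would combine the two cases: in either case \cref{concave-min} ensures that every local minimum on the boundary is a candidate for the global minimum, and the characterization of (\ref{kernel_eigen_prob}) shows that the points produced by \cref{kernel_bodame_algo} and \cref{kernel_bodame_algo_hard} exhaust the globally optimal KKT points of the trust-region subproblem. Pulling back via $\gparam_{\mathrm{opt}} = \hat{\gparam}_{\mathrm{opt}} + \B^{-1}\vecb$ then yields a global optimum of the original problem (\ref{kernel-objective})--(\ref{kernel-constraint}).

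The main obstacle I expect is the hard-case analysis: unlike the easy case, one cannot simply read off the primal solution from the generalized eigenvector, and one has to argue carefully that the corrected vector constructed in \cref{kernel_bodame_algo_hard} still lies in the boundary of the ellipsoid, attains the smallest eigenvalue as its Lagrange multiplier, and is in fact a global minimum of the concave quadratic (as opposed to merely a stationary point). The key structural fact that makes this cleaner here than in the general trust-region literature is concavity of the objective, which via \cref{concave-min} allows us to restrict attention to the boundary from the outset and so bypass the usual interior-minimum subcase altogether.
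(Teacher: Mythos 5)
Your proposal is correct and follows essentially the same route as the paper: split on whether $\hat{\veca} \perp \eigenspace$, handle the non-orthogonal case via \cref{global_opt_kernel} and \cref{kernel_bodame_algo}, and defer the hard case to the degenerate trust-region machinery of \citep{adachi2017solving} (the paper cites their Theorem 4.3, matching your appeal to their Algorithm 2) as implemented in \cref{kernel_bodame_algo_hard}. The only detail the paper adds that you omit is the practical test for detecting the hard case, namely checking whether $\|\eigenvec_1\|_{\B} = 0$ after running \cref{kernel_bodame_algo}.
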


\begin{proof}[Proof of Theorem \ref{thm:global-opt-hard}]
If $\hat{\veca} \not\perp \eigenspace$ holds, we can get global optimum of (\ref{kernel-objective}) and (\ref{kernel-constraint}) using \cref{global_opt_kernel}.
If  $\hat{\veca} \perp \eigenspace$ holds, we can get a global optimum of (\ref{kernel-objective}) and (\ref{kernel-constraint}) using Theorem 4.3 in \citep{adachi2017solving}.
We can distinguish the hard case that $\hat{\veca} \perp \eigenspace$ holds by checking whether the $\B$--norm of eigenvalues $\|\eigenvec_{1}\|_{\B}$ equals to 0.
Therefore, we first check the condition that $\|\eigenvec_{1}\|_{\B}$ equals to 0 by following \cref{global_opt_kernel} and then, if necessary, run \cref{kernel_bodame_algo_hard} to find the global optimum of BODAME--KRR/KR (\ref{kernel-objective}) and (\ref{kernel-constraint}).
\end{proof}

\begin{algorithm}
\caption{Algorithm for BODAME--KRR/KR where $\hat{\veca} \perp \eigenspace$ holds}
\label{kernel_bodame_algo_hard}
\begin{algorithmic}[1]
    \REQUIRE $\A$, $\veca$, $\B$, $\epsquality$, $\vecb$, $\eigenvalue$, $\gammab$
    \ENSURE Surrogate parameter $\gparam_{\mathrm{opt}}$
        \STATE Compute $\B^{-1} \vecb, \hat{\A}, \hat{\veca}, \hat{\epsquality}$
        \STATE Compute the $\B$--orthogonal null vectors of $\hat{\A} + \eigenvalue \B$, $\nulleigenvec_{1}, \dots, \nulleigenvec_{\nulldim}$, where $\nulldim = \dim(\Ker(\hat{\A} + \eigenvalue \B))$
        \STATE Fix $\eigenkkt > 0$
        \STATE Compute $\Hardmat = \hat{\A} + \eigenvalue \B + \eigenkkt \sum_{i=1}^{\nulldim} \B \nulleigenvec_{i} \nulleigenvec_{i}^{\top} \B $
        \STATE Solve $\Hardmat q + \hat{\veca} = 0$ for $q$ by the Conjugate Gradient Method
        \STATE Take an eigenvector $\nulleigenvec \in \Ker(\hat{\A} + \eigenvalue \B)$ and find $\beta \in \mathbb{R}$ such that $\|q + \beta \nulleigenvec\|_{\B} = \hat{\epsquality}$
        \STATE $\gparam_{\mathrm{opt}} = \q + \beta \nulleigenvec + \B^{-1}b$
        \RETURN $\gparam_{\mathrm{opt}}$
\end{algorithmic}
\end{algorithm}

In practice, we may use adding a perturbation such as normal noise to $\hat{\veca}$ to avoid the hard case using \cref{kernel_bodame_algo_hard} that $\hat{\veca} \perp \eigenspace$ holds; we only use the \cref{kernel_bodame_algo} by the perturbation to solve BODAME--KRR/KR (\ref{kernel-objective}) and (\ref{kernel-constraint}) although slight change of $\hat{\veca}$ may not lead to a global optimum of the original problem.

\end{document}